\newtheorem{proposition}{Proposition}
\newtheorem{corollary}{Corollary}
\newtheorem{remark}{Remark}
\definecolor{orange}{rgb}{1,0.647,0}
\definecolor{darkgreen}{rgb}{0,0.5,0}
\begin{document}


\title{Physics-informed Neural Networks to Model and Control Robots: a Theoretical and Experimental Investigation}

\maketitle


\author{Jingyue Liu$^{1,*}$}
\author{Pablo Borja$^{2}$}
\author{Cosimo Della Santina$^{1,3}$}



\begin{affiliations}
$^{1}$Department of Cognitive Robotics, Delft University of Technology, Delft 2628 CD, The Netherlands \\
\texttt{\{J.Liu-14, C.DellaSantina\}@tudelft.nl}\\[0.1cm]
$^{2}$School of Engineering, Computing and Mathematics, University of Plymouth, Plymouth PL4 8AA,\\ United Kingdom.
\texttt{pablo.borjarosales@plymouth.ac.uk}
\\[0.1cm]
$^{3}$Institute of Robotics and Mechatronics German Aerospace Center (DLR) Oberpfaffenhofen 82234,\\ Germany\\[0.1cm]
$^*$Corresponding author
\end{affiliations}


\keywords{Physics-inspired neural networks, Hamiltonian neural networks, Lagrangian neural networks, model-based control, dissipation, Euler-Lagrange equations, port-Hamiltonian systems}

\begin{abstract} \normalsize
This work concerns the application of physics-informed neural networks to the modeling and control of complex robotic systems. Achieving this goal required extending Physics Informed Neural Networks to handle non-conservative effects. We propose to combine these learned models with model-based controllers originally developed with first-principle models in mind. By combining standard and new techniques, we can achieve precise control performance while proving theoretical stability bounds. These validations include real-world experiments of motion prediction with a soft robot and of trajectory tracking with a Franka Emika manipulator.
%
%
\end{abstract}

\section{Introduction}
%

%
Deep Learning (DL) has made significant strides across various fields, with robotics being a salient example. DL has excelled in tasks such as vision-guided navigation \cite{chen2020deep}, grasp-planning \cite{ichnowski2020deep}, human-robot interaction \cite{mukherjee2022survey}, and even design \cite{stella2023how}. Despite this, the application of DL to generate motor intelligence in physical systems remains limited. Deep Reinforcement Learning, in particular, has shown the potential to outperform traditional approaches in simulations \cite{bucsoniu2018reinforcement,wurman2022outracing,rudin2022learning}. However, its transfer to physical applications has been primarily hampered by the prerequisite of pre-training in a simulated environment \cite{akkaya2019solving,zhao2020sim,kulkarni2022learning}.

The central drawback of general-purpose DL lies in its sample inefficiency, stemming from the need to distill all aspects of a task from data \cite{sunderhauf2018limits,antonelli2023data}. In response to these challenges, there's a rising trend in robotics to specifically incorporate geometric priors into data-driven methods to optimize the learning efficiency \cite{beik2021learning,simeonov2022neural,urain2022se}. This approach proves especially advantageous for high-level tasks that need not engage with the system's physics.
%
%
Physics-inspired neural networks \cite{daw2017physics,karniadakis2021physics,djeumou2022neural}, infusing fundamental physics knowledge into their architecture and training, have found success in various fields outside robotics, from earth science to materials science \cite{chen2021physics,huang2022applications,mao2020physics,niaki2021physics}. In robotics, integration of Lagrangian or Hamiltonian mechanics with deep learning has yielded models like Deep Lagrangian Neural Networks (LNNs) \cite{cranmer2020lagrangian}, and Hamiltonian Neural Networks (HNN) \cite{greydanus2019hamiltonian}. Several extensions have been proposed in the literature, for example, including contact models \cite{zhong2021extending}, or proposing graph formulations \cite{bhattoo2023learning}. 
The potential of Lagrangian and Hamiltonian Neural Networks in learning the dynamics of basic physical systems has been demonstrated in various studies \cite{roehrl2020modeling,zhong2021benchmarking,bhattoo2022learning,djeumou2022neural}. However, the exploration of these techniques in modeling intricate robotic structures, especially with real-world data, is still in its early stages. Notably, \cite{lutter2023combining} applied these methods to a position-controlled robot with four degrees of freedom, which represents a relatively less complex system in comparison to contemporary manipulators.

This work deals with the experimental application of PINN to rigid and soft continuum robots \cite{della2021soft}. Such endeavor required modifying LNN and HNN to fix three issues that prevented their application to these systems: (i) the lack of energy dissipation mechanism, (ii) the assumption that control actions are collocated on the measured configurations, (iii) the need for direct acceleration measurements, which are non-causal and require numerical differentiation. For issue (iii), we borrow a strategy proposed in \cite{gupta2019general,gupta2020structured}, which relies on forward integrating the dynamics, while for (i) and (ii), we propose innovative solutions.

Furthermore, we exploit a central advantage of LNNs and HNNs compared to other learning techniques; the fact that the learned model has the mathematical structure that is usually assumed in robots and mechanical systems control. By forcing such a representation, we use model-based strategies originally developed for first principle models \cite{murray1994mathematical,khalil2015nonlinear,della2023model} to obtain provably stable performance with guarantees of robustness.

The use of PINNs in control has only recently started to be explored. Recent investigations \cite{gupta2020structured,zheng2023physics,sanyal2022ramp} focused on combining PINNs with model predictive control (MPC), thus not exploiting the mathematical structure of the learned equations. Indeed, this strategy is part of an increasingly established trend seeking the combination of (non-PI and non-deep) learned models with MPC \cite{hewing2019cautious,mitsioni2023safe}. Applications to control PDEs are discussed in \cite{arnold2021state,mowlavi2023optimal}, while an application to robotics is investigated in simulation in \cite{nicodemus2022physics}.
%
Preliminary investigations in other model-based techniques are provided in \cite{lutter2019bdeep,lutter2023combining}, where, however, controllers are provided without any guarantee of stability or robustness and formulated for specific cases.


%
To summarize, in this work, we contribute to state of art in PINNs and robotics with the following:
\begin{enumerate}
    \item An approach to include dissipation and allow for non-collocated control actions in Lagrangian and Hamiltonian neural networks, solving issues (i) and (ii).
    \item Controllers for regulation and tracking, grounded in classic nonlinear control that exploit the mathematical structure of the learned models. For the first time, we prove the stability and robustness of these strategies.
    \item Simulations and experiments on articulated and soft continuum robotic systems. 
    To the Authors' best knowledge, these are the first validation of PINN, and PINN-based control applied to complex mechanical systems. 
\end{enumerate}


%
%
%
\section{Preliminaries} \label{sec: preliminaries}
\subsection{Lagrangian and Hamiltonian Dynamics}\label{subsec:LH1}

Robots' dynamics can be represented using Lagrangian or Hamiltonian mechanics. In the former, the state is defined by the generalized coordinates ${q} \in \mathbb{R}^N$ and their velocities $\dot{{q}} \in \mathbb{R}^N$, where $N$ represents the configuration space dimension. The Euler-Lagrange equation dictates the system's behavior $\frac{\mathrm{d} }{\mathrm{d} t} \left(\frac{\partial L({q}, \dot{{q}})}{\partial \dot{{q}}} \right) - \frac{\partial L({q}, \dot{{q}})}{\partial{{q}}} = {F}_{\tt ext}$, where $L({q}, \dot{{q}}) = T({q}, \dot{{q}}) - V({q})$ with potential energy $V$ and kinetic energy $T\frac{1}{2}\dot{{q}}^{\top}{M}({q})\dot{{q}}$, where ${M}({q}) \in \mathbb{R}^{N\times N}$ is the positive definite mass inertia matrix. External forces, denoted as ${F}_{\tt ext}\in\mathbb{R}^N$, include control inputs and dissipation forces.

In Hamiltonian mechanics, momenta ${p} \in \mathbb{R}^N$ replace the velocities, with $\dot{{q}} ={M}^{-1}({q}){p}$. The Hamiltonian equations $  \dot{{q}} = \frac{\partial H({q}, {p})}{\partial {p}}, \quad   \dot{{p}} = -\frac{\partial H({q}, {p})}{\partial {q}}+{F}_{\tt ext},$ where $H({q}, {p}) = T({q}, {p}) + V({q})$ is the total energy. The kinetic energy in this case is defined as $T({q}, {p}) = \frac{1}{2}{p}^{\top}{M}^{-1}({q}){p}$.

\subsection{LNNs and HNNs}\label{subsec:LNNHNN1}

Lagrangian Neural Networks (LNNs) employ the principle of least action to learn a Lagrangian function $\mathcal{L}(q,\dot{q})$ from trajectory data, with the learned function generating dynamics via standard Euler-Lagrange machinery \cite{murray1994mathematical}. The loss function for the LNN is given by the Mean Squared Error between the actual accelerations $\ddot{q}$ and the ones that the learned model would expect $\hat{\ddot{q}}$
\begin{equation}\label{eq:loss_LNN_vanilla}
    \mathcal{L}_{\mathrm{LNN}} = \text{MSE}(\ddot{q}, \hat{\ddot{q}}).
\end{equation}
HNNs, conversely, are designed to learn the Hamiltonian function $H(p, q)$. Once learned, this Hamiltonian function provides dynamics through Hamilton's equations. The loss function for HNN is similarly an MSE but between the predicted and actual time derivatives of generalized coordinates and momenta:
\begin{equation}\label{eq:loss_HNN_vanilla}
\mathcal{L}_{\mathrm{HNN}} = \text{MSE}((\dot{q}, \dot{p}), (\hat{\dot{q}}, \hat{\dot{p}})).
\end{equation}
We use fully connected neural networks with multiple layers of neurons with associated weights to learn the Lagrangian or the Hamiltonian, shown in Figure \ref{fig:FNN}. 

\subsection{Limits of classic LNNs and HNNs}

Note that both loss functions rely on measuring derivatives of the state $\ddot{q}$ and $\dot{p}$, which - by definition of state - cannot be directly measured. This issue is easily circumvented in simulation by the use of a non-causal sensor. Yet, this is not a feasible solution with physical experiments. An unrobust alternative is to estimate these values from measurements of positions and velocities numerically. This relates to issue (iii), stated in the introduction.

Moreover, existing LNNs and HNNs assume that ${F}_{\tt ext}\in\mathbb{R}^N$ is directly measured. This is a reasonable hypothesis only if the system is conservative, fully actuated, and the actuation is collocated. The first characteristic is never fulfilled by real systems, while the second and the third are very restrictive outside when dealing with innovative robotic solutions as soft \cite{della2021soft} or flexible robots \cite{della2021flexible}. Note that learning-based control is imposing itself as a central trend in these non-conventional robotic systems \cite{laschi2023learning}. These considerations relate to issues (i) and (ii) stated in the introduction.


\begin{figure}[h]
    \centering
    \includegraphics[width=0.3\linewidth]{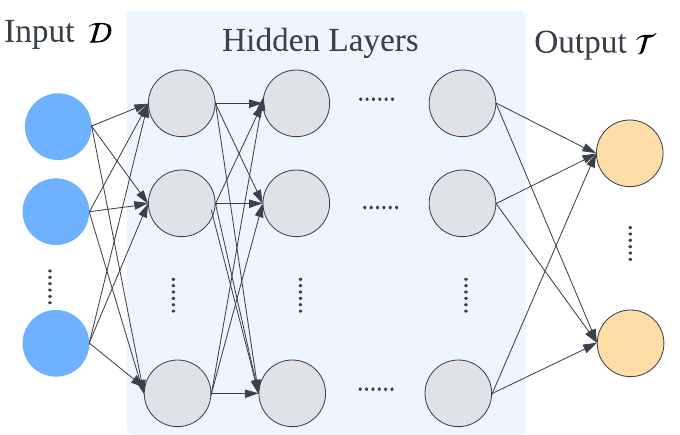}
    \caption{Fully connected network.}
    \label{fig:FNN}
\end{figure}

\section{Proposed algorithms} \label{sec: Methods}

\subsection{A learnable model for non-conservative forces}\label{subsec:LH1}

In standard LNNs and HNNs theory, non-conservative forces are assumed to be fully known and to be equal to actuation forces directly acting on the Lagrangian coordinates $q$. This is very restrictive, as already discussed in the introduction.

In this work, we include given by dissipation and actuator forces, i.e., ${F}_{\tt ext} = {F}_{\tt d}(q,\dot{q}) + {F}_{\tt a}({q})$. 
We propose the following model for dissipation forces
\begin{equation}
    {F}_{\tt d}(q,\dot{q}) = -{D}({q})\dot{{q}}, \label{fd}
\end{equation}
where ${D}({q}) \in \mathbb{R}^{N\times N}$ is the positive semi-definite damping matrix. Besides, we model the actuator force as
\begin{equation}
   {F}_{\tt a}({q}) = {A}({q}){u}, \label{fext}
\end{equation}
where ${u} \in \mathbb{R}^M$ is the control input signal to the system, and ${A}({q}) \in \mathbb{R}^ {N \times M}$ is an input transformation matrix. For example, ${A}$ could be the transpose Jacobian associated with the point of application of an actuation force on the structure.
With this model, we take into account that in complex robotic systems, actuators are, in general, not collocated on the measured configurations $q$. Note that, even if we accepted to impose an opportune change of coordinates, for some systems, a representation without ${A}$ is not even admissible \cite{pustina2023collocated}. With \eqref{fext}, we also seemingly treat underactuated systems.

Note that \cite{lutter2019bdeep} uses a dissipative model, but considers it in a white box fashion.

Hence, we rewrite the Lagrangian dynamics as follows
\begin{equation}
        \ddot{{q}} = \left (\frac{\partial^2 L({q}, \dot{{q}})}{\partial \dot{{q}}^2}\right)^{-1}
\left ({A}({{q}}){u} -  \frac{\partial^2 L({q}, \dot{{q}})}{\partial {q}\partial \dot{{q}}}\dot{{q}} + \frac{\partial L({q}, \dot{{q}})}{\partial {q}} - {D}({q})\dot{{q}}\right).
\label{armdynamics}
\end{equation}

Similarly, the Hamiltonian takes the form
\begin{equation} \label{eq:hamiltonian_dyn}
\begin{bmatrix}\dot{{q}}\\\dot{{p}}\end{bmatrix} = 
\begin{bmatrix}
 0 & I\\
-I & -{D}({q})
\end{bmatrix}\begin{bmatrix}\frac{\partial H({q}, \dot{{q}})}{\partial {q}}\\\frac{\partial H({q}, \dot{{q}})}{\partial {p}}\end{bmatrix}
+\begin{bmatrix}
 0\\
{A}({q})
\end{bmatrix}{u}.
\end{equation} 

\subsection{Non-conservative non-collocated Lagrangian and Hamiltonian NNs with modified loss}\label{subsec:pinns}

Figure \ref{fig:HandL} reports the proposed network framework, which builds upon Lagrangian and Hamiltonian NNs discussed in Sec. \ref{subsec:LNNHNN1}. Our work incorporates the damping matrix network, input matrix network, and a modified loss function into the original framework. The damping matrix network is used to account for the dissipation forces in the system via \eqref{fd}, while the input matrix network corresponds to $A(q)$ in \eqref{fext}. We predict the next state by integrating \eqref{armdynamics} or \eqref{eq:hamiltonian_dyn} with the aid of the Runge-Kutta4 integrator. Clearly, different integration strategies could be used in its place.

\begin{figure}[ht]
\centering
\includegraphics[width=1.0\linewidth]{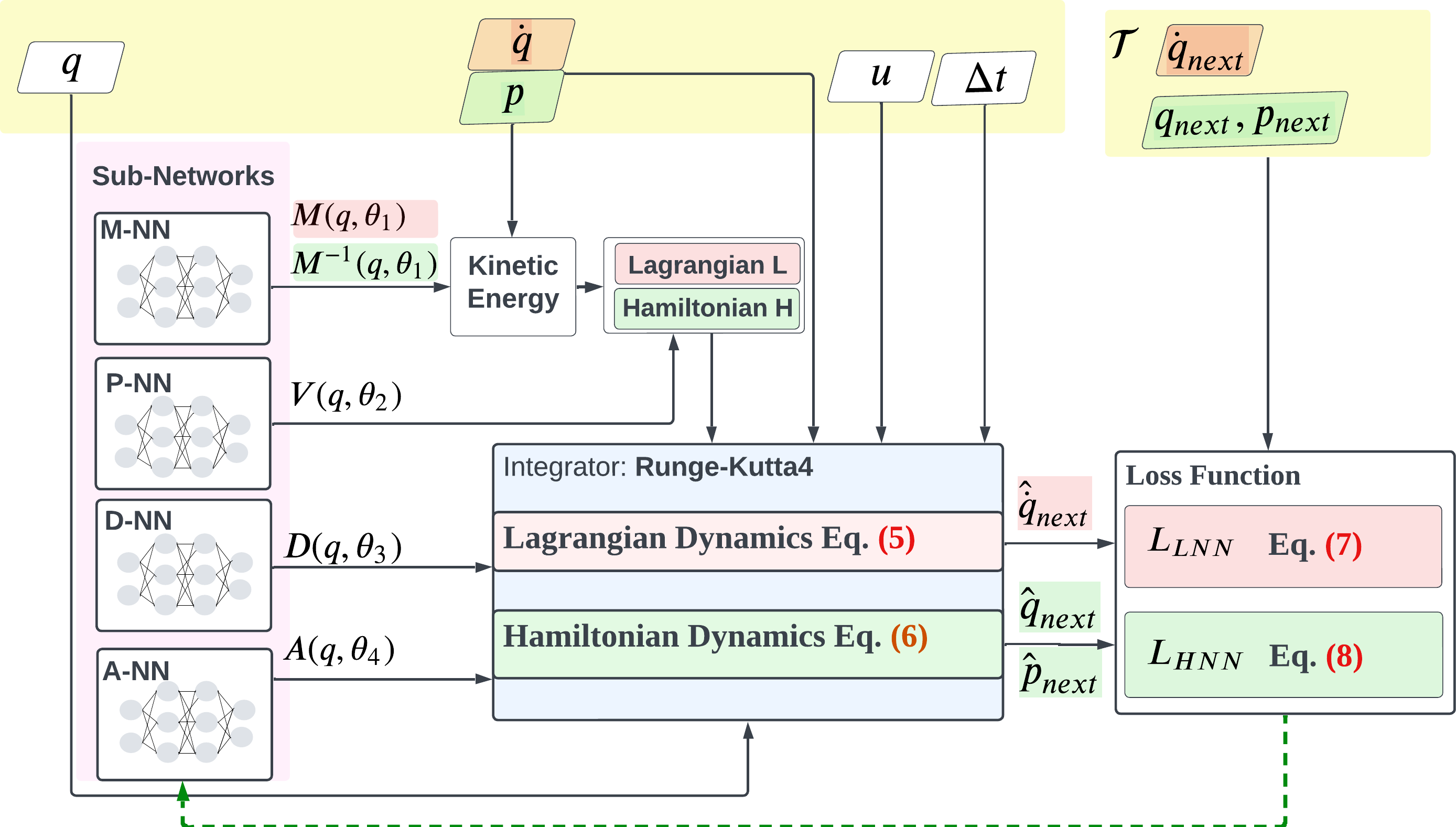}
\caption{The overview of Lagrangian and Hamiltonian neural networks: in red, the data and calculation process required for Lagrangian dynamics, while the green parts represent the corresponding data and calculation associated with the Hamiltonian dynamics.} 
\label{fig:HandL}
\end{figure}

The dataset $\mathfrak{D} = [\mathcal{D}_k, \mathcal{T}_k | k \in \{0, ..., k_{\tt end}\}]$  contains information about the state transitions of the mechanical system. With this compact notation, we refer not necessarily to a single evolution, but we include a concatenation of an arbitrary number of evolutions of the system. 
The input data $\mathcal{D}_k$ is composed of either $[{q}_k, {\dot{q}}_k, {u}_k, \Delta t]$, for Lagrangian dynamics, or $[{q}_k, {p}_k, {u}_k, \Delta t]$ in the case of Hamiltonian dynamics. Similarly, the corresponding label $\mathcal{T}_k$ is either ${\dot{q}}_{k+1}$, for the Lagrangian case, or $[{q}_{k+1}, {p}_{k+1}]$ for Hamiltonian dynamics.

 The values of ${M}({q})$, ${V}({q})$, ${D}({q})$, and ${A}({q})$ are estimated by four sub-networks, namely, the mass network (M-NN), potential energy network (P-NN), damping network (D-NN), and input matrix network (A-NN), as shown in Figure \ref{fig:HandL}. The kinetic energy can be calculated once the values of $\dot{{q}}$ or ${p}$ are obtained. Then, the Lagrangian or Hamiltonian functions can be derived from the kinetic and potential energies. The derivative of the states $\ddot{\hat{{q}}}$ or $[\dot{\hat{{q}}} \ \ \dot{\hat{{p}}}]^{\top}$ can be computed using (\ref{armdynamics}) or  (\ref{eq:hamiltonian_dyn}), respectively. The predicted next state $\dot{\hat{{q}}}$ or $[\hat{{q}} \ \ \hat{{p}}]^{\top}$ can be obtained using the Runge-Kutta4 integrator.

We thus employ the following modified losses \cite{gupta2019general,gupta2020structured}
\begin{equation}
    \mathcal{L}_{\mathrm{LNN}} = \frac{1}{\#\mathcal{D}}\sum_{k \in \mathcal{D}} \parallel \dot{{q}}_{k+1} -   \dot{\hat{{q}}} _{k+1}\parallel_2^2 
\end{equation}
for LNNs, where $\#\mathcal{D}$ is the cardinality of $\mathcal{D}$, and
\begin{equation}
    \mathcal{L}_{\mathrm{HNN}} = \frac{1}{\#\mathcal{D}}\sum_{k \in \mathcal{D}} \left( \parallel {q}_{k+1} -  \hat{{q}} _{k+1}\parallel_2^2 +
\parallel {p}_{k+1} -  \hat{{p}} _{k+1}\parallel_2^2 \right)
\end{equation}
for HNNs. Thus, compared to \eqref{eq:loss_LNN_vanilla} and \eqref{eq:loss_HNN_vanilla}, we are calculating the MSE of a future prediction of the state - simulated via the learned dynamics - rather than of the current accelerations, which cannot be measured.
Note that we also include a measure of the prediction error at the configuration level for $\mathcal{L}_{\mathrm{HNN}}$ because the information on $\frac{\partial H({q}, \dot{{q}})}{\partial {p}}$ appears disentangled from $D$ and $A$ (which are also learned) in the first $n$ equations of \eqref{eq:hamiltonian_dyn}.

\subsection{Sub-Network Structures}\label{subsec:pinns}
Constraints based on physical principles can be imposed on the parameters learned by the four sub-networks. Specifically, the mass and damping matrices must be positive definite and positive semi-definite, respectively. To this end, the network structure of the dissipation matrix can follow the prototype established for the mass matrix in \cite{lutter2019deep}. This structure can be decomposed into a lower triangular matrix ${L_D}$ with non-negative diagonal elements, which is then computed using the Cholesky decomposition \cite{trefethen2022numerical} as ${D} =  {L_D}{L_D}^{\top}$. The representation of ${D}({q})$ is illustrated in Figure \ref{fig:M-NND-NN}. 
\begin{figure}[h]
    \centering
    \includegraphics[width=0.4\columnwidth]{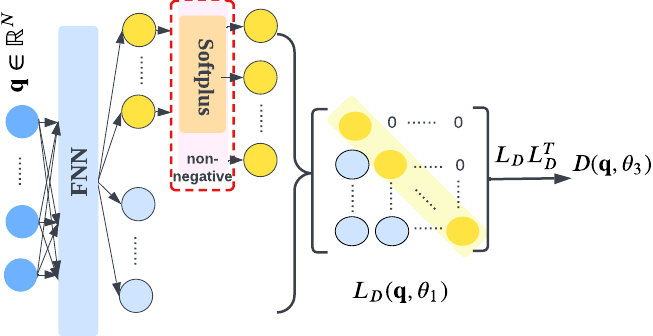}
    \caption{Diagram of the damping matrix including a feed-forward neural network, a non-negative shift for diagonal entries, and the Cholesky decomposition}
    \label{fig:M-NND-NN}
\end{figure}

The output of M-NN and D-NN is calculated as $(N^2+N)/2$, with the first $N$ values representing the diagonal entries of the lower triangular matrix. To ensure non-negativity, activation functions such as Softplus or ReLU are utilized as the last layer. Furthermore, a small positive shift, denoted by $ + \epsilon$, is introduced to guarantee that the mass matrix is positive definite. The remaining $(N^2-N)/2$ values are placed in the lower left corner of the lower triangular matrix.

The calculation of the potential energy is performed using a simple, fully connected network with a single output, which is represented as ${V}({q}, \theta_2)$. Moreover, A-NN, depicted in Figure \ref{fig:A-NN}, calculates ${A}({q}, \theta_4)$ with dimensions $\mathbb{R}^{N \times M}$.

\begin{figure}[h]
    \centering
    \includegraphics[width=0.4\columnwidth]{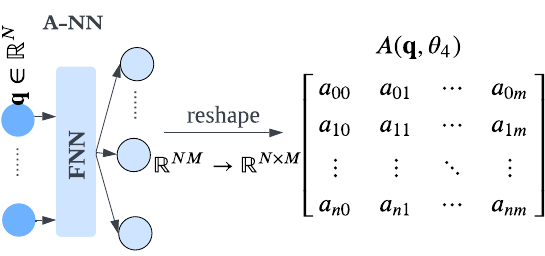}
    \caption{Diagram for actuator matrix: The fully connected network output is a vector in $\mathbb{R}^{NM}$, which is reshaped to a matrix in $\mathbb{R}^{N \times M}$. A sigmoid activation function can be applied to the matrix elements for value constraint.}
    \label{fig:A-NN}
\end{figure}

\subsection{PINN-based controllers}

We provide in this section two provably stable controllers by combining the learned dynamics in combination with classic model-based approaches. Before stating these results, it is important to spend a few lines remarking that a globally optimum execution of the learning process described above will result in learning $M_{\tt L}$, $G_{\tt L} = \frac{\partial V_{\tt L}}{\partial q}$, $A_{\tt L}$, $D_{\tt L}$ such that 
\begin{equation}
    \mathcal{L}(q,\dot{q}; M_{\tt L}, G_{\tt L}, A_{\tt L}, D_{\tt L}) = \mathcal{L}(q,\dot{q}; M, G, A, D)
\end{equation}
for the proposed LNN, or
\begin{equation}
    \mathcal{H}(q,\dot{q}; M_{\tt L}, G_{\tt L}, A_{\tt L}, D_{\tt L}) = \mathcal{H}(q,\dot{q}; M, G, A, D),
\end{equation}
for the proposed HNN, where $M, G, A, D$ are the \textit{real} reference values. Instead, we highlight the components that have been learned from the ones that are not by adding an L as a subscript. Also, by construction, $M_{\tt L}, G_{\tt L}, A_{\tt L}, D_{\tt L}$ will have all the usual properties that we expect from these terms, like $M_{\tt L}$ and $D_{\tt L}$ being symmetric and positive definite, and $G_{\tt L}$ being a potential force.

Yet, this does not imply that $M = M_{\tt L}$, $G = G_{\tt L}$, and so on. Indeed, there could exists a matrix $P(q)$ such that $P(q) M(q)$, $P(q) G(q)$, $P(q) A(q)$, $P(q) D(q)$ have all the properties discussed above while simultaneously fulfilling
\begin{equation}
    \mathcal{L}(q,\dot{q}; PM, PG, PA, PD) = \mathcal{L}(q,\dot{q}; M, G, A, D) \text{ or } \mathcal{H}(q,\dot{q}; PM, PG, PA, PD) = \mathcal{H}(q,\dot{q}; M, G, A, D).
\end{equation}

%
So controllers must be formulated and proofs derived under the assumption of the learned terms being close to the real ones up to a multiplicative factor. 

\subsubsection{Regulation}

The goal of the following controller is to stabilize a given configuration ${q}_{\tt ref}$
\begin{equation}\label{eq:controller_PD}
    {u} = {A}_{\tt L}^{+}({q}){G}_{\tt L}({q}) + {A}_{\tt L}^{\top}(q)({K}_{\tt P} ({q}_{\tt ref} - {q}) - {K}_{\tt D}\dot{{q}}),
\end{equation}
where we omit the arguments $t$ and $\theta_i$ to ease the readability. We highlight the components that have been learned from the ones that are not by adding an L as a subscript.
${G}_{\tt L}({q}_{\tt ref})$ is the potential force which can be calculated by taking the partial derivative of the potential energy learned by the LNN; ${K}_{\tt P}$ and ${K}_{\tt D}$ are positive definite control gains.

For the sake of conciseness, we introduce the controller, and we prove its stability for the fully actuated case. However, the controller and the proof can be extended to the generic underactuated case using arguments in \cite{della2023model}. This will be the focus of future work.
\begin{proposition}\label{pr:1}
    Assume that $M = N$, with $A$ and $A_{\tt L}$ both full rank. Then, given a maximum admitted error $\delta_{\tt q}$, the closed loop of \eqref{armdynamics} and \eqref{eq:controller_PD} is such that 
    \begin{equation}
        \lim_{{t \to \infty}} q(t) = q_{\tt ss} \text{ with } ||q_{\tt ss} - q_{\tt ref }|| < \delta_{\tt q},
    \end{equation}
    if ${K}_{\tt P},{K}_{\tt D} \succ \kappa I$, with $\kappa \in \mathbb{R}$ high enough, and if it exists a matrix ${P}(q) \in \mathbb{R}^{N \times N}$ such that $|| {G}_{\tt L} (q) - {P}(q){G}(q) || < \delta_{\tt G}$, for some finite and positive $\delta_{\tt G}$.
    Also, we assume that
    \begin{equation}\label{eq:PD_transpose}
        A(q)[{A}_{\tt L} (q) - {P}(q) {A}(q)]^{\top} + A(q)A^{\top}(q)P^{\top}(q) \succ 0,
    \end{equation}
    and that
    \begin{equation}\label{eq:smallenss_A_2}
        ||{A}^{-1}(q) {P}^{-1}(q) [{A}_{\tt L} (q) - {P}(q) {A}(q)]|| < 1.
    \end{equation}
\end{proposition}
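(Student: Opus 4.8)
The plan is to insert the controller \eqref{eq:controller_PD} into the closed loop and read the result as an ideal PD-plus-gravity-compensation scheme perturbed by the mismatch between the learned and the real terms. First I would rewrite \eqref{armdynamics} in the standard manipulator form $M\ddot q+C(q,\dot q)\dot q+G(q)+D(q)\dot q=A(q)u$, with $C$ chosen so that $\dot M-2C$ is skew-symmetric and $G=\partial V/\partial q$, and substitute \eqref{eq:controller_PD} using $A_{\tt L}^{+}=A_{\tt L}^{-1}$ (licit since $M=N$ and $A_{\tt L}$ is full rank). A useful preliminary observation is that the left-hand side of \eqref{eq:PD_transpose} telescopes: $A(A_{\tt L}-PA)^{\top}+AA^{\top}P^{\top}=AA_{\tt L}^{\top}$. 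Hence \eqref{eq:PD_transpose} is exactly the requirement $AA_{\tt L}^{\top}\succ0$, which is what makes the proportional and derivative feedback act as a genuine stiffness and damping in the closed loop.

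Second, I would show that the feed-forward term compensates gravity up to a bounded residual. Writing $E_A=A_{\tt L}-PA$ and $G_{\tt L}=PG+\Delta_G$ with $\|\Delta_G\|<\delta_{\tt G}$, one obtains $AA_{\tt L}^{-1}PG=A(I+A^{-1}P^{-1}E_A)^{-1}A^{-1}G$, and condition \eqref{eq:smallenss_A_2} states precisely that $\|A^{-1}P^{-1}E_A\|<1$. The Neumann series therefore converges and $AA_{\tt L}^{-1}G_{\tt L}=G+\epsilon_G$ with $\|\epsilon_G\|\le\beta$ for a finite $\beta$ depending on $\delta_{\tt G}$, on $\|E_A\|$, and on a workspace bound for $G$. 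This is the structural crux: the multiplicative ambiguity $P$ cancels to leading order, so the scheme can succeed even though $A$ and $G$ are identified only up to the factor $P$.

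Third, I would establish the steady state and bound its error. Setting $\dot q=\ddot q=0$ yields the implicit relation $AA_{\tt L}^{\top}K_{\tt P}(q_{\tt ref}-q_{\tt ss})=\epsilon_G(q_{\tt ss})$; since $AA_{\tt L}^{\top}\succ0$ and $K_{\tt P}\succ\kappa I$, the map $q\mapsto q_{\tt ref}-(AA_{\tt L}^{\top}K_{\tt P})^{-1}\epsilon_G(q)$ is a contraction on a neighborhood of $q_{\tt ref}$ once $\kappa$ is large, so a fixed point $q_{\tt ss}$ exists and satisfies $\|q_{\tt ss}-q_{\tt ref}\|\le\beta/\bigl(\kappa\,\lambda_{\min}(AA_{\tt L}^{\top})\bigr)$. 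Taking $\kappa$ high enough drives this bound below $\delta_{\tt q}$, which is the accuracy claim.

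Finally, for convergence I would pass to the shifted error $\bar e=q-q_{\tt ss}$ and use $W=\tfrac12\dot q^{\top}M\dot q+\tfrac12\bar e^{\top}\Pi\bar e$ with constant $\Pi\succ0$. Differentiating, the skew-symmetry of $\dot M-2C$ cancels the Coriolis contribution, while $D$ together with $AA_{\tt L}^{\top}K_{\tt D}$ (positive by \eqref{eq:PD_transpose}) furnishes a negative term $-\dot q^{\top}(D+AA_{\tt L}^{\top}K_{\tt D})\dot q$. I expect the main obstacle to be that $AA_{\tt L}^{\top}$ is neither symmetric nor constant, so its skew part and its configuration dependence generate cross terms of order $\|\dot q\|\,\|\bar e\|$ that do not cancel against $\bar e^{\top}\Pi\dot q$; these must be dominated by the damping via Young's inequality, which is precisely why $K_{\tt D}\succ\kappa I$ with $\kappa$ large is required. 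Once $\dot W\le0$ with equality confined to $\dot q=0$, LaSalle's invariance principle restricts the trajectory to the equilibrium set and forces $q(t)\to q_{\tt ss}$, completing the argument.
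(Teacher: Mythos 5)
Your reduction of the closed loop coincides with the paper's almost step for step: the Neumann-series inversion of $A_{\tt L}$ under \eqref{eq:smallenss_A_2}, the lumping of all learning mismatches into a bounded residual (your $\epsilon_G$ is the paper's $\Delta_{\tt all}$), and the reading of \eqref{eq:PD_transpose} as positivity of the effective feedback gains. Your telescoping identity $A(A_{\tt L}-PA)^{\top}+AA^{\top}P^{\top}=AA_{\tt L}^{\top}$ is in fact a cleaner statement of that last point than appears in the paper, and your explicit fixed-point bound $\|q_{\tt ss}-q_{\tt ref}\|\le\beta/\bigl(\kappa\,\lambda_{\min}(AA_{\tt L}^{\top})\bigr)$ makes quantitative what the paper leaves implicit. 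The two arguments diverge only at the end: the paper stops at the closed loop \eqref{eq:closed_loop_PD} and invokes the regulation result of \cite{montagna2023regulation}, adapted from \cite{tomei1991simple}, whereas you attempt a self-contained Lyapunov/LaSalle argument.

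That final step, as sketched, does not go through. With $W=\tfrac12\dot q^{\top}M\dot q+\tfrac12\bar e^{\top}\Pi\bar e$, the derivative along the closed loop has the form $\dot W\le-\lambda\|\dot q\|^{2}+c\,\|\bar e\|\,\|\dot q\|$, where the cross term collects exactly the effects you identify (the non-symmetric, configuration-dependent $AA_{\tt L}^{\top}K_{\tt P}$, and the drift of $\Delta_{\tt all}(q)$ away from its value at $q_{\tt ss}$), and where $\lambda$ grows with $\kappa$. Young's inequality converts this into $-(\lambda-\tfrac{1}{2\eta})\|\dot q\|^{2}+\tfrac{\eta c^{2}}{2}\|\bar e\|^{2}$: however large you take $\kappa$, the positive $\|\bar e\|^{2}$ term survives, because $\dot W$ contains nothing negative in $\bar e$ to absorb it --- velocity damping alone cannot dominate a position-dependent perturbation. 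Hence $\dot W\le0$ fails on any neighborhood of the equilibrium, and LaSalle is inapplicable. The standard repair, which is essentially what the results cited by the paper implement, is to strictify the Lyapunov function with a cross term, e.g.\ $W_{\varepsilon}=W+\varepsilon\,\bar e^{\top}M(q)\dot q$: differentiating produces the negative term $-\varepsilon\,\bar e^{\top}AA_{\tt L}^{\top}K_{\tt P}\,\bar e$, which is negative definite by \eqref{eq:PD_transpose} together with $K_{\tt P}\succ\kappa I$, and for $\kappa$ large and $\varepsilon$ small this dominates all the $\|\bar e\|\,\|\dot q\|$ leftovers, giving convergence to $q_{\tt ss}$ directly, with no appeal to invariance principles. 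With that substitution your argument closes; as written, the concluding claim $\dot W\le0$ with equality only at $\dot q=0$ is false.
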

\begin{remark}\label{rk:1}
    Note that if $P(q) \succ 0$, then $A(q)A^{\top}(q)P^{\top}(q) \succ 0$, and \eqref{eq:PD_transpose} translates into another request of ${A}_{\tt L} (q)$ being close enough to ${A}(q)$ up to a multiplicative factor ${P}(q)$. The positive definiteness of $P$ is, in turn, a request on the quality of the outcome of the learning process. Indeed, if $|| {M}_{\tt L} (q) - {P}(q) {M}(q) ||$ is small enough, then the positive definitness of ${M}_{\tt L} $ and $M$ implies the one of $P$.
    Similarly, \eqref{eq:smallenss_A_2} is always verified for small enough $||{A}_{\tt L} (q) - {P}(q) {A}(q)||$.
\end{remark}
\begin{proof}
    Let us introduce the matrix $\Delta_{\tt A} \in \mathbb{R}^{N \times N}$ such that $ {A}_{\tt L} (q) = {P}(q) {A}(q) + \Delta_{\tt A}(q)$. This matrix is small enough by hypothesis as detailed in Remark \ref{rk:1}. We now want to bound the difference between the inverse of ${A}(q)$ and ${P}(q) {A}_{\tt L} (q)$. The goal is to write ${A}_{\tt L}^{-1}(q) =   ({P}(q){A}(q))^{-1} + \Delta_{\tt I}(q)$, with $||\Delta_{\tt I}(q)|| < \delta_{\tt I}$.

    %

    Under hypothesis \eqref{eq:smallenss_A_2}, we can use the Neumann series \cite{petersen2008matrix}
    \begin{equation}
    {A}_{\tt L}^{-1}(q) = ({P}(q){A}(q) + \Delta_{\tt A}(q))^{-1} = ({P}(q){A}(q))^{-1} - ({P}(q){A}(q))^{-1} \Delta_{\tt A}(q) ({P}(q){A}(q))^{-1} + \dots
    \end{equation}
    Rearranging terms, we define
    \begin{equation}
    \Delta_{\tt I}(q) = {A}_{\tt L}^{-1}(q) - ({P}(q){A}(q))^{-1} = - ({P}(q){A}(q))^{-1}  \Delta_{\tt A}(q) ({P}(q){A}(q))^{-1} + \ldots
    \end{equation}
    We can therefore bound the norm of $\Delta_{\tt I}(q)$ as follows
    \begin{equation}
    ||\Delta_{\tt I}(q)|| \leq \frac{||({P}(q){A}(q))^{-1} \Delta_{\tt A}(q) ({P}(q){A}(q))^{-1}||}{1 - ||({P}(q){A}(q))^{-1} \Delta_{\tt A}(q)||} < \delta_{\tt I}.
    \end{equation}
    We can therefore, rewrite the generalized forces produced by the controller $A(q) u$ as
    \begin{equation}
        \begin{split}
            &A(q) \left[{A}_{\tt L}^{-1}(q){G}_{\tt L}({q}) + {A}_{\tt L}^{\top}(q)({K}_{\tt P} ({q}_{\tt ref} - {q}) - {K}_{\tt D}\dot{{q}}) \right]\\
            = \; &A(q)(A^{-1}(q)P^{-1}(q) + \Delta_{\tt I}(q))\left[(P(q){G}({q}) + \Delta_{\tt G}(q))\right] + A(q)(P(q)A(q) + \Delta_{\tt A}(q))^{\top}\left[{K}_{\tt P} ({q}_{\tt ref} - {q}) - {K}_{\tt D}\dot{{q}} \right]\\
            = \; &(P^{-1}(q) + A(q)\Delta_{\tt I}(q))(P(q){G}({q}) + \Delta_{\tt G}(q)) + (A(q)A^{\top}(q)P^{\top}(q) + A(q)\Delta_{\tt A}^{\top}(q))\left[{K}_{\tt P} ({q}_{\tt ref} - {q}) - {K}_{\tt D}\dot{{q}} \right]\\
            = \; &G(q) + \Delta_{\tt all}(q) + \hat{K}_{\tt P} ({q}_{\tt ref} - {q}) - \hat{K}_{\tt D}\dot{{q}}
        \end{split}
    \end{equation}

    Where $\Delta_{\tt all}(q) = P^{-1}(q)\Delta_{\tt G}(q) + A(q)\Delta_{\tt I}(q)P(q)G(q) + A(q)\Delta_{\tt I}(q)\Delta_{\tt G}(q)$ is a bounded term, as sum and product of bounded terms. The gains $\hat{K}_{\tt P}$ and $\hat{K}_{\tt D}$ are positive definite being product of two positive definite matrices.
    The closed loop is then
    \begin{equation}\label{eq:closed_loop_PD}
        \begin{split}
            M(q)\Ddot{q} + C(q,\dot{q})\dot{q} = \Delta_{\tt all}(q) + \hat{K}_{\tt P} ({q}_{\tt ref} - {q}) - (D(q) + \hat{K}_{\tt D})\dot{{q}}.
        \end{split}
    \end{equation}
    In this segment, we establish our thesis by adopting and replicating the arguments provided in \cite{montagna2023regulation}, which is, in turn, adapted from the seminal paper \cite{tomei1991simple}. This direct application of an existing theorem has been made possible by our rearrangement of the closed loop, which has made it identical to the structure delineated in those papers.
    
\end{proof}
Note that even if we provided the proof using a Lagrangian formalism, the Hamiltonian version can be derived following similar steps. Also, note that the bounds on the learned matrices are always verified for any choice of $\delta_{\tt A},\delta_{\tt G}$ at the cost of training the model with a large enough training set.
We conclude with a corollary that discusses the perfect learning scenario.
\begin{corollary}\label{cr:1}
    Assume that $M = N$ and $A$ is full rank. Then, the closed loop of \eqref{armdynamics} and \eqref{eq:controller_PD} is such that 
    \begin{equation}
        \lim_{{t \to \infty}} q(t) = q_{\tt ref },
    \end{equation}
    if ${K}_{\tt P},{K}_{\tt D} \succ 0$ and if it exists a matrix ${P}(q) \in \mathbb{R}^{N \times N}$ such that ${M}_{\tt L} (q) = {P}(q) {M}(q)$, ${A}_{\tt L} (q) = {P}(q) {A}(q)$, ${G}_{\tt L} (q) = {P}(q){G}(q) $.
\end{corollary}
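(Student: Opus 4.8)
\emph{Proof plan.} I would treat this corollary as the exact-learning degeneration of Proposition~\ref{pr:1}: every mismatch term built in that proof is now identically zero, and the conclusion sharpens from convergence to a ball of radius $\delta_{\tt q}$ to convergence to $q_{\tt ref}$ itself. First I would fix the matrix $P$ whose existence is assumed and record its structural properties. Since the number of inputs equals $N$ and $A$ is full rank, $A(q)$ is square and invertible, and from ${M}_{\tt L}(q)={P}(q){M}(q)$ with ${M}_{\tt L},M$ symmetric positive definite we get $P(q)={M}_{\tt L}(q)M^{-1}(q)$; this $P$ is invertible and similar to the symmetric positive definite matrix $M^{-1/2}{M}_{\tt L}M^{-1/2}$, so its spectrum is real and positive. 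The three hypotheses ${M}_{\tt L}=PM$, ${A}_{\tt L}=PA$, ${G}_{\tt L}=PG$ are precisely the statement that $\Delta_{\tt A}(q)=0$ and $\Delta_{\tt G}(q)=0$ in the notation of Proposition~\ref{pr:1}.

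Next I would recycle the algebraic reduction of the closed loop carried out there. With $\Delta_{\tt A}=0$ the Neumann-series remainder gives $\Delta_{\tt I}(q)=0$, and with $\Delta_{\tt G}=0$ as well one obtains $\Delta_{\tt all}(q)=0$. Substituting into \eqref{eq:closed_loop_PD} removes the residual forcing and leaves the ideal gravity-compensated form
\begin{equation*}
    M(q)\ddot{q} + C(q,\dot{q})\dot{q} + \big(D(q)+\hat{K}_{\tt D}\big)\dot{q} = \hat{K}_{\tt P}\,({q}_{\tt ref}-{q}),
\end{equation*}
with $\hat{K}_{\tt P}=A(q)A^{\top}(q)P^{\top}(q){K}_{\tt P}$ and $\hat{K}_{\tt D}=A(q)A^{\top}(q)P^{\top}(q){K}_{\tt D}$. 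I would then locate the equilibria: imposing $\dot{q}=\ddot{q}=0$ yields $\hat{K}_{\tt P}({q}_{\tt ref}-q)=0$, and since $AA^{\top}\succ0$, $P$ is invertible and ${K}_{\tt P}\succ0$, the product $\hat{K}_{\tt P}$ is nonsingular; hence $q={q}_{\tt ref}$, $\dot{q}=0$ is the unique equilibrium, with zero steady-state bias because $\Delta_{\tt all}$ has vanished.

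To close the argument I would invoke the same regulation theorem used in Proposition~\ref{pr:1} (the Tomei/Montagna result), whose hypotheses the rearranged closed loop already matches: the system is exactly a mechanical system under PD-plus-gravity-compensation, so the energy-based Lyapunov certificate gives asymptotic attractivity of the unique equilibrium and therefore $\lim_{t\to\infty}q(t)={q}_{\tt ref}$. It is also worth checking that the relaxation of the gain condition from ${K}_{\tt P},{K}_{\tt D}\succ\kappa I$ (large $\kappa$) to merely $\succ0$ is legitimate here: the large-gain requirement in Proposition~\ref{pr:1} existed only to dominate the disturbance $\Delta_{\tt all}$, which is now absent, and conditions \eqref{eq:PD_transpose}--\eqref{eq:smallenss_A_2} collapse to the single requirement that $AA^{\top}P^{\top}$ be admissible as a closed-loop gain.

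The step I expect to be the main obstacle is precisely this positive-definiteness of the effective gains $\hat{K}_{\tt P},\hat{K}_{\tt D}$. Because $P(q)$ need not be symmetric, $AA^{\top}P^{\top}{K}_{\tt P}$ is a product of matrices each having positive spectrum yet is itself non-symmetric, so ``positive definite'' here cannot mean symmetric positive definiteness, and the quadratic form $x^{\top}\hat{K}_{\tt P}x$ need not stay positive. I would therefore have to confirm that the notion of definiteness actually consumed by the cited Lyapunov argument is the one satisfied by $\hat{K}_{\tt P}$ --- for instance by working with the symmetrized gain or by passing to a weighted Lyapunov function that absorbs $AA^{\top}P^{\top}$ --- rather than the naive quadratic-form positivity; everything else in the proof is a routine substitution of zeros into the Proposition~\ref{pr:1} computation.
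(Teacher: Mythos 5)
Your plan is essentially the paper's own proof: with the mismatch terms zero, the paper likewise notes that \eqref{eq:PD_transpose} collapses (via full-rank $A$) to $P^{\top}(q)\succ 0$, argued from ${M}_{\tt L}=PM$ with both masses positive definite, that \eqref{eq:smallenss_A_2} is trivially satisfied, and that $\Delta_{\tt all}=0$ reduces the closed loop \eqref{eq:closed_loop_PD} to a potential-free mechanical system under PD control, concluding by the Lyapunov candidate $V(q,\dot q)=T(q,\dot q)+\frac{1}{2}q^{\top}\hat K_{\tt P}q$ rather than by re-invoking Tomei/Montagna as you do --- an immaterial difference, since that theorem is itself this energy argument. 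The obstacle you flag at the end is genuine and is in fact glossed over by the paper, which asserts that $\hat K_{\tt P}=A A^{\top}P^{\top}K_{\tt P}$ and $\hat K_{\tt D}$ are ``positive definite being product of two positive definite matrices'' --- false as a quadratic-form statement when $P$ is not symmetric (a product of positive definite factors need only have positive real spectrum), so your proposed symmetrization or weighted Lyapunov function is precisely the repair the paper's own argument would need to be fully rigorous.
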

\begin{proof}
    Let's start from \eqref{eq:PD_transpose}, which now becomes
    %
        $A(q)A^{\top}(q)P^{\top}(q) \succ 0$.
    %
    Furthermore, considering that $A(q)$ is full rank by hypothesis yields the equivalent condition $P^{\top}(q) \succ 0$. As discussed in the remark before, this is implied by the fact that ${M}_{\tt L} (q) = {P}(q) {M}(q)$ and both ${M}_{\tt L}$ and ${M}(q)$ are positive definite. Thus, \eqref{eq:PD_transpose} is always verified.
    Similarly, \eqref{eq:smallenss_A_2} is trivially verified for ${A}_{\tt L} (q) = {P}(q) {A}(q)$.
    
    Moreover, note that $\Delta_{\tt all} = 0$ as the deltas are now all zero. So, the closed loop \eqref{eq:closed_loop_PD} is always the equivalent of a mechanical system, without any potential force, controlled by a PD. Note that the gains are positive because we just proved that $P^{\top}(q) \succ 0$, and because ${K}_{\tt P},{K}_{\tt D} \succ 0$ by hypothesis. The proof of stability follows standard Lyapunov arguments (see, for example, \cite{murray1994mathematical}) by using the Lyapunov candidate $V(q,\dot{q}) = T(q,\dot{q}) + \frac{1}{2}{q}^{\top}\hat{K}_{\tt P}{q}$.
\end{proof}

\subsubsection{Trajectory tracking}
The goal of the following controller is to track a given trajectory in configuration space ${{q}}_{\tt ref}:\mathbb{R} \rightarrow \mathbb{R}^n$. We assume ${{q}}_{\tt ref}$ to be bounded with bounded derivatives. We also assume the system to be fully actuated - i.e., $M=N$, $\mathrm{det}({A}) \neq 0$, $\mathrm{det}({A}_{\tt L}) \neq 0$. Under these assumptions, we extend \eqref{eq:controller_PD} with the following controller to follow the desired trajectory 
\begin{equation}
    \begin{split}
        {u} = &{A}_{\tt L}^{-1}(q) \left({M}_{\tt L}({q}_{\tt ref})\ddot{{q}}_{\tt ref} + {C}_{\tt L} ({q}_{\tt ref}, \dot{{q}}_{\tt ref})\dot{{q}}_{\tt ref} + {D}_{\tt L} (q_{\tt ref})\dot{{q}}_{\tt ref} + {G}_{\tt L}({q}_{\tt ref})\right) \\
    + &{A}_{\tt L}^{\top}(q) \left({K}_{\tt P} ({q}_{\tt ref} - {q}) + {K}_{\tt D}(\dot{{q}}_{\tt ref}-\dot{{q}}))\right),
    \end{split}
\label{eq: controller}
\end{equation}
where we omit the arguments $t$ and $\theta_i$ to ease the readability. We highlight the components that have been learned from the ones that are not by adding an L as a subscript.
We can obtain the Coriolis matrix ${C}_{\tt L} ({q}_{\tt ref}, \dot{{q}}_{\tt ref})$ from the learned Lagrangian by taking the second partial derivative of the Lagrangian with respect to the desired joint position ${q_{\tt ref}}$ and velocity $\dot{{q}}_{\tt ref}$, i.e., $\frac{\partial^2 L({q_{\tt ref}}, \dot{{q}}_{\tt ref})}{\partial {q_{\tt ref}}\partial \dot{{q}}_{\tt ref}}$.

\begin{corollary}
    The closed loop of \eqref{armdynamics} and \eqref{eq: controller} is such that, for some $ \delta_{\tt q} \geq 0$,
    \begin{equation}
        \lim_{{t \to \infty}} ||q(t)  - q_{\tt ref }(t)|| < \delta_{\tt q},
    \end{equation}
    if ${K}_{\tt P},{K}_{\tt D} \succ \kappa I$, with $\kappa \in \mathbb{R}$ high enough, and if it exists a matrix ${P}(q) \in \mathbb{R}^{N \times N}$ such that ${A}_{\tt L} (q) = {P}(q) {A}(q)$, $ {M}_{\tt L} (q) = {P}(q) {M}(q)$, $ {C}_{\tt L} (q) = {P}(q) {C}(q)$, $ {G}_{\tt L} (q) = {P}(q) {G}(q)$, ${D}_{\tt L} (q) = {P}(q) {D}(q)$. We also assume that $P$ is such that $||P^{-1}(q)P(q_{\tt ref}) - I|| < \delta_{\tt P}$ for some $\delta_{\tt P}>0$.
\end{corollary}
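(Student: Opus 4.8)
The plan is to mirror the structure of the proof of Proposition~\ref{pr:1}: substitute the controller \eqref{eq: controller} into the dynamics \eqref{armdynamics}, use the assumed multiplicative relationships to collapse the closed loop into a standard tracking form with a bounded residual, and then invoke the same feedforward-plus-PD ultimate-boundedness machinery used there. Writing \eqref{armdynamics} in manipulator form $M(q)\ddot{q} + C(q,\dot{q})\dot{q} + D(q)\dot{q} + G(q) = A(q)u$, I would first simplify the two channels. Since $A_{\tt L}(q) = P(q)A(q)$ gives $A_{\tt L}^{-1}(q) = A^{-1}(q)P^{-1}(q)$, premultiplication by $A(q)$ yields $A(q)A_{\tt L}^{-1}(q) = P^{-1}(q)$ on the feedforward term and $A(q)A_{\tt L}^{\top}(q) = A(q)A^{\top}(q)P^{\top}(q)$ on the PD term, exactly as in the regulation case.

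For the feedforward term, substituting $M_{\tt L},C_{\tt L},D_{\tt L},G_{\tt L}$ (all evaluated at $q_{\tt ref}$ and each equal to $P(q_{\tt ref})$ times its true counterpart) produces the factor $P^{-1}(q)P(q_{\tt ref})$ multiplying the exact inverse-dynamics torque $\tau_{\tt ref} := M(q_{\tt ref})\ddot{q}_{\tt ref} + C(q_{\tt ref},\dot{q}_{\tt ref})\dot{q}_{\tt ref} + D(q_{\tt ref})\dot{q}_{\tt ref} + G(q_{\tt ref})$. Using the hypothesis $\|P^{-1}(q)P(q_{\tt ref}) - I\| < \delta_{\tt P}$, I would write $P^{-1}(q)P(q_{\tt ref}) = I + E(q)$ with $\|E(q)\| < \delta_{\tt P}$, so the feedforward equals $\tau_{\tt ref} + E(q)\tau_{\tt ref}$. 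Because $q_{\tt ref}$ is bounded with bounded derivatives and $M,C,D,G$ are continuous, $\tau_{\tt ref}$ is bounded in time, so $E(q)\tau_{\tt ref}$ is a persistent disturbance of size $O(\delta_{\tt P})$. The closed loop then reads $M(q)\ddot{q} + C(q,\dot{q})\dot{q} + D(q)\dot{q} + G(q) = \tau_{\tt ref} + E(q)\tau_{\tt ref} + \hat{K}_{\tt P}(q_{\tt ref}-q) + \hat{K}_{\tt D}(\dot{q}_{\tt ref}-\dot{q})$, with $\hat{K}_{\tt P} = A(q)A^{\top}(q)P^{\top}(q)K_{\tt P}$ and $\hat{K}_{\tt D} = A(q)A^{\top}(q)P^{\top}(q)K_{\tt D}$, which are positive definite by the same reasoning as Remark~\ref{rk:1}: $M_{\tt L} = PM$ with $M_{\tt L},M \succ 0$ forces $P \succ 0$, and full-rank $A$ then makes $A A^{\top}P^{\top} \succ 0$.

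The final step is to pass to the tracking error $e = q - q_{\tt ref}$ and read the result as a feedforward-plus-PD scheme subject to a bounded disturbance. Subtracting the identity $\tau_{\tt ref} = M(q_{\tt ref})\ddot{q}_{\tt ref} + C(q_{\tt ref},\dot{q}_{\tt ref})\dot{q}_{\tt ref} + D(q_{\tt ref})\dot{q}_{\tt ref} + G(q_{\tt ref})$ collects the model-evaluation mismatches into a term $\eta(e,\dot{e},t) = [M(q)-M(q_{\tt ref})]\ddot{q}_{\tt ref} + [C(q,\dot{q})-C(q_{\tt ref},\dot{q}_{\tt ref})]\dot{q}_{\tt ref} + [D(q)-D(q_{\tt ref})]\dot{q}_{\tt ref} + [G(q)-G(q_{\tt ref})]$, giving $M(q)\ddot{e} + (C(q,\dot{q})+D(q)+\hat{K}_{\tt D})\dot{e} + \hat{K}_{\tt P}e = E(q)\tau_{\tt ref} - \eta(e,\dot{e},t)$. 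The terms in $\eta$ vanish at $e=\dot{e}=0$ and, by local Lipschitz continuity and the bounded reference, are dominated by $\kappa(\|e\|+\|\dot{e}\|)$-type estimates. I would then close the argument with a classical tracking Lyapunov candidate combining the error kinetic energy $\tfrac{1}{2}\dot{e}^{\top}M(q)\dot{e}$ and $\tfrac{1}{2}e^{\top}\hat{K}_{\tt P}e$, exploiting the skew-symmetry of $\dot{M}-2C$, in the spirit of \cite{montagna2023regulation,tomei1991simple} as invoked for Proposition~\ref{pr:1}, to obtain ultimate boundedness with an ultimate bound that shrinks as $\kappa$ grows and $\delta_{\tt P}$ decreases; choosing $\kappa$ high enough drives it below $\delta_{\tt q}$.

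I expect the main obstacle to be that the feedforward is evaluated at $q_{\tt ref}$ rather than at the actual state $q$, so—unlike exact computed torque—the nonlinearities are not cancelled and the residual $E(q)\tau_{\tt ref}$ does not vanish. The delicate part is therefore uniformly bounding both the state-dependent mismatch $\eta$ and the persistent $O(\delta_{\tt P})$ disturbance, and verifying that the (state-dependent) effective gains $\hat{K}_{\tt P},\hat{K}_{\tt D}$ dominate them so that the error is only \emph{practically}—not asymptotically—stabilized, consistent with the strict inequality and the freedom $\delta_{\tt q}\geq 0$ in the statement.
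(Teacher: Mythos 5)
Your proposal is correct and follows essentially the same route as the paper: you perform the identical substitutions ($A(q)A_{\tt L}^{-1}(q)=P^{-1}(q)$, $A(q)A_{\tt L}^{\top}(q)=A(q)A^{\top}(q)P^{\top}(q)$), isolate the same bounded residual — your $E(q)\tau_{\tt ref}$ is exactly the paper's $\Delta_{\tt all}=\Delta_{\tt P}\tau_{\tt ref}$ with $\Delta_{\tt P}=P^{-1}(q)P(q_{\tt ref})-I$ — and establish positive definiteness of the effective gains via $P\succ 0$ as in Corollary~\ref{cr:1}. The only difference is that where the paper concludes by citing the feedforward-plus-PD closed loop of \cite{kelly1994pd}, you spell out that final step (error dynamics, the mismatch term $\eta$, and the Lyapunov/ultimate-boundedness argument), which is precisely the machinery that citation delegates.
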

\begin{proof}
    We can rewrite \eqref{eq: controller} by substituting the values of the learned elements in terms of $P$. The result is
    \begin{equation}
    \begin{split}
        A(q) {u} = & \Delta_{\tt all} + \left({M}({q}_{\tt ref})\ddot{{q}}_{\tt ref} + {C} ({q}_{\tt ref}, \dot{{q}}_{\tt ref})\dot{{q}}_{\tt ref} + {D} (q_{\tt ref})\dot{{q}}_{\tt ref} + {G}({q}_{\tt ref})\right) \\
    + &(A(q) {A}^{\top}(q)P^{\top}(q)) \left({K}_{\tt P} ({q}_{\tt ref} - {q}) + {K}_{\tt D}(\dot{{q}}_{\tt ref}-\dot{{q}}))\right),
    \end{split}
\end{equation}
where 
$$\Delta_{\tt all} = \Delta_{\tt P}\left({M}({q}_{\tt ref})\ddot{{q}}_{\tt ref} + {C} ({q}_{\tt ref}, \dot{{q}}_{\tt ref})\dot{{q}}_{\tt ref} + {D} (q_{\tt ref})\dot{{q}}_{\tt ref} + {G}({q}_{\tt ref})\right), $$ 
with $\Delta_{\tt P} = P^{-1}(q)P(q_{\tt ref}) - I$. Thus, $\Delta_{\tt all}$ is bounded by hypothesis as a product and the sum of bounded terms.
Moreover, as discussed in the proof of Corollary \ref{cr:1}, $A {A}^{\top}P^{\top} \succ 0$. Thus, being the closed loop is equivalent to the one discussed in \cite{kelly1994pd}, the same steps discussed there can be followed to yield the proof.
\end{proof}
Note that even if we provided the proof using a Lagrangian formalism, the Hamiltonian version can be derived following similar steps. Also, the bound $\delta_{\tt q}$ can be made as small as we desire at the cost of making the control gains large enough.

Finally, note that we provided here only proof of stability for the perfectly learned case. Similar hypotheses and arguments to the ones in Proposition \ref{pr:1} would lead to similar results in the tracking case, with $||{P}(q) {A}_{\tt L} (q) - {A}(q)|| < \delta_{\tt A}$, $|| {P}(q) {M}_{\tt L} (q) - {M}(q) || < \delta_{\tt M}$, $|| {P}(q) {C}_{\tt L} (q) - {C}(q) || < \delta_{\tt C}$, $|| {P}(q) {G}_{\tt L} (q) - {G}(q) || < \delta_{\tt G}$, $|| {P}(q) {D}_{\tt L} (q) - {D}(q) || < \delta_{\tt G}$, for some finite and positive $\delta_{\tt A},\delta_{\tt M},\delta_{\tt C},\delta_{\tt G},\delta_{\tt D} \in \mathbb{R}$. 

\section{Methods: Simulation and experiment design}\label{subsec:Experiment setup}
To evaluate the efficacy of the proposed PINNs and PINN-based control, we apply them in three distinct tasks: {(T1)} Learning the dynamic model of a one-segment spatial soft manipulator, (T2) Learning the dynamic model of a two-segment spatial soft manipulator, (T3) Learning the dynamic model of the Franka Emika Panda robot. 
We selected (T1) and (T2) because they have a nontrivial $A(q)$, and (T3) because it has several degrees of freedom. 
Furthermore, we employ the learned dynamics to design and test model-based controllers for T2 and T3. 

In a hardware experiment, the LNN is utilized to learn the dynamic model of the tendon-driven soft manipulator reported in \cite{9762144} and the Panda robot. We show for the first time experimental closed-loop control of a robotic system (the Panda robot) with a PINN-based algorithm.

\subsection{Data Generation} 
Training data for T1 and T2 are generated by simulating the dynamics of one-segment and two-segment soft manipulators in MATLAB. For T1, ten different initial states are combined with ten different input signals to generate data using the one-segment manipulator dynamics model. Each combination produces ten-second training data with a time step of 0.0002 seconds. For T2, we use a variable step size in Simulink to generate datasets from the mathematical model of a two-segment soft manipulator. With this approach, we create twelve different sixty-second trajectories, which are subsequently resampled at fixed frequencies of 50Hz, 100Hz, and 1000Hz. Concerning T3, PyBullet simulation environment  is used to generate training data corresponding to the Panda robot. Then, Different input signals are applied to the joints to create the data of 70 different trajectories with a frequency of 1000Hz.

Regarding experimental validation, we propose the following experiments. For the tendon-driven continuum robot, we provide sinusoidal inputs with different frequencies and amplitudes to the actuators---four motors---and record the movement of the robot. An IMU records the tip orientation data with a 10Hz sampling frequency. As a result, 122 trajectories are generated, and four more are collected as the test set. For the Panda robot, we provide 70 sets of sinusoidal desired joint angles with different amplitudes and frequencies. We collect the torque, joint angle, and angular velocity data using the integrated sensors, considering a sampling frequency of 500Hz.

\subsection{Baseline Model and Model Training}
In order to provide a basis for comparison, baseline models are established for all simulations and hardware experiments. These models, which serve as a control, are constructed using fully connected network and trained using the same datasets as the proposed models, however, with a larger amount of data and a greater number of training epochs. These baseline models aimed to demonstrate the benefits of incorporating physical knowledge into neural networks.

In this project, all the neural networks utilized are constructed using the JAX and dm-Haiku packages in Python. In particular, the JAX Autodiff system is used to calculate partial derivatives and the Hessian within the loss function. The optimization of the model parameters is carried out using AdamW in the Optax package.

\section{Simulation Results} \label{sec: Simulation Results}
\subsection{One-segment 3D soft manipulator}
To define the configuration space of the soft manipulator, we adopt the piecewise constant curvature (PCC) approximation \cite{hannan2003kinematics} shown in Figure \ref{fig: PCC}. Customarily, this approximation describes the configuration of each segment as ${q}_i = [\phi_i, \theta_i, \delta \ell_i]$, where $\phi_i$ is the plane orientation, $\theta_i$ is the curvature in that plane, and $\delta \ell_i$ is the change of arc length. In this work, the configuration-defined method reported in \cite{della2020improved} is used to avoid the singularity problem of PCC. Hence, the configuration of each segment is given by  $[\Delta_{xi}, \Delta_{yi}, \Delta_{\ell i}]$, where $\Delta_{xi}$ and $\Delta_{yi}$ are the difference of arc length.
\begin{figure}[t]
    \centering
    \includegraphics[width=0.6\columnwidth]{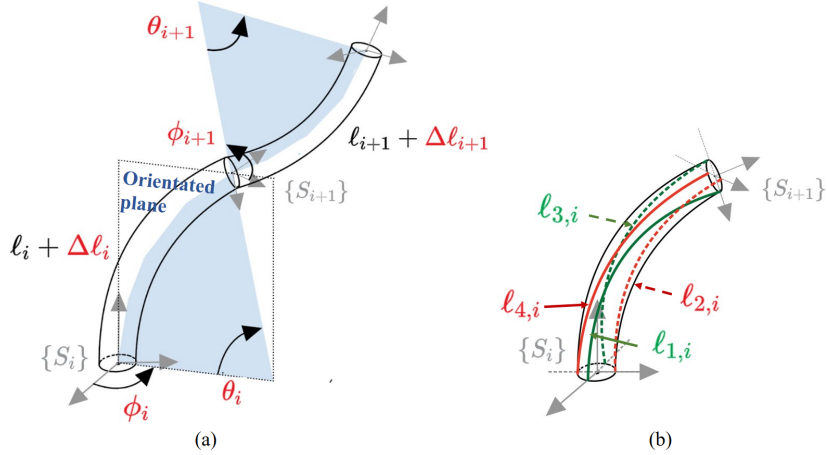}
    \caption{PCC approach illustration: (a) two-segment soft manipulator is shown, where $S_{i}$ is the end frame, the blue parts are the orientated plane, $\ell_i$ is the original length of each segment; (b) shows the length
    of the four arcs whose ends connected to the frame $S_{i}$ }
    \label{fig: PCC}
\end{figure}

\begin{table}[ht]
\caption{One-segment soft manipulator simulation detailed information}
\centering
\begin{tabular}{cccc} 
\hline
                 & Black-box model           &  \makecell[c]{Lagrangian-based \\  learned model}                       & \makecell[c]{Hamiltonian-based \\  learned model}                      \\ 
\hline
model (width $\times$ depth)            & $128 \! \times \! 5$            & $ 32 \!\times \! 3, 5 \! \times \! 3,  16 \! \times \! 2$ & $ 32 \!\times \! 3, 5 \! \times \! 3,  16 \! \times \! 2$   \\
sample number    & 19188 & 8000                            & 8000                           \\
training epoch   & 15000 & 6000                            & 6000                            \\
traning error    & $6.891e^{-5} \pm  4.63e^{-4}$        & $8.418e^{-7} \pm  1.77e^{-5}$                                & $5.374e^{-11} \pm  7.74e^{-10}$                                 \\
prediction error [m] & $7.647 \pm  10.413$(5s)        & $0.171 \pm  0.272$(5s)                                   & $0.0220 \pm  0.0210$ (5s)                                 \\
\hline
\end{tabular}
\label{table: one-segment-simulation}
\end{table}

\begin{figure}[ht]
    \centering
    \includegraphics[width=0.7\columnwidth]{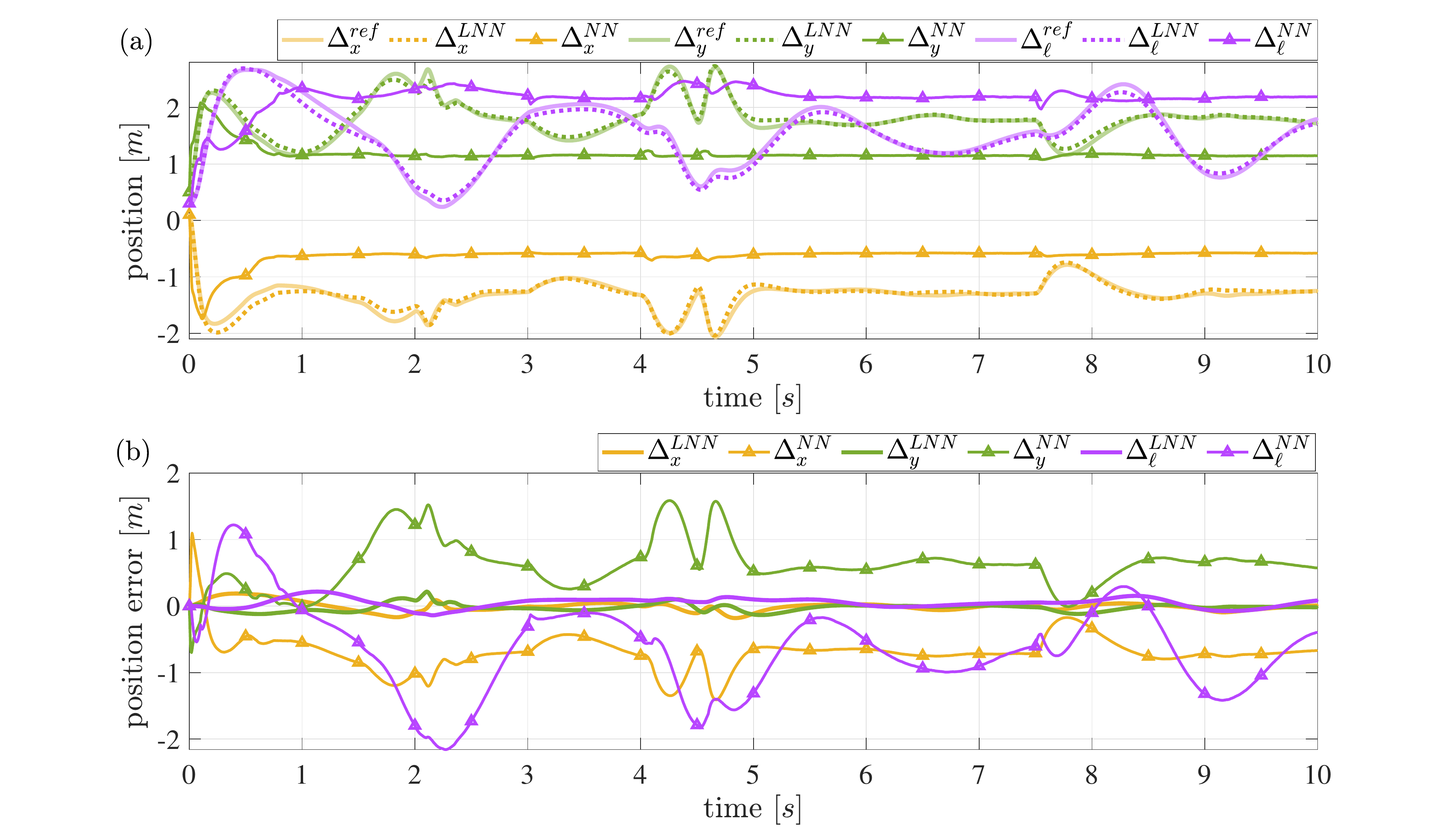}
    \caption{One-segment soft manipulator leaned model comparison results: (a) depicts the predictions generated by the black-box model ($\bigtriangleup$), the Lagrangian-based learning model ($\cdots $), and the ground-truth ($-$) arising from the dynamic mathematical equations; (b) shows the prediction error of these two learned models.}
    \label{fig: one_segment_LNN}
\end{figure}
The detailed information for this task is shown in Table \ref{table: one-segment-simulation}. The prediction results of these two learned models are compared in Figure \ref{fig: one_segment_LNN}. The figure indicates that the model trained by LNNs exhibits a high degree of predictive accuracy, manifesting near-infinite prediction capabilities with over 50,000 consecutive prediction steps in this example. While some areas exhibit less precise fits, it is important to note that such errors do not accrue over time. These outcomes suggest that LNN-based models can effectively capture the underlying dynamics of the one-segment soft manipulator. By contrast, the black-box model converges during the training process, but it does not gain insights into the dynamic model from its prediction performance. This system is also learned using HNNs by providing momentum data. Hamiltonian-based neural networks yield similar quality prediction results as Lagrangian-based neural networks, as shown in Figure \ref{fig: one_segment_LNN_HNN}. 

\begin{figure}[!htb]
    \centering
    \includegraphics[width=0.7\columnwidth]{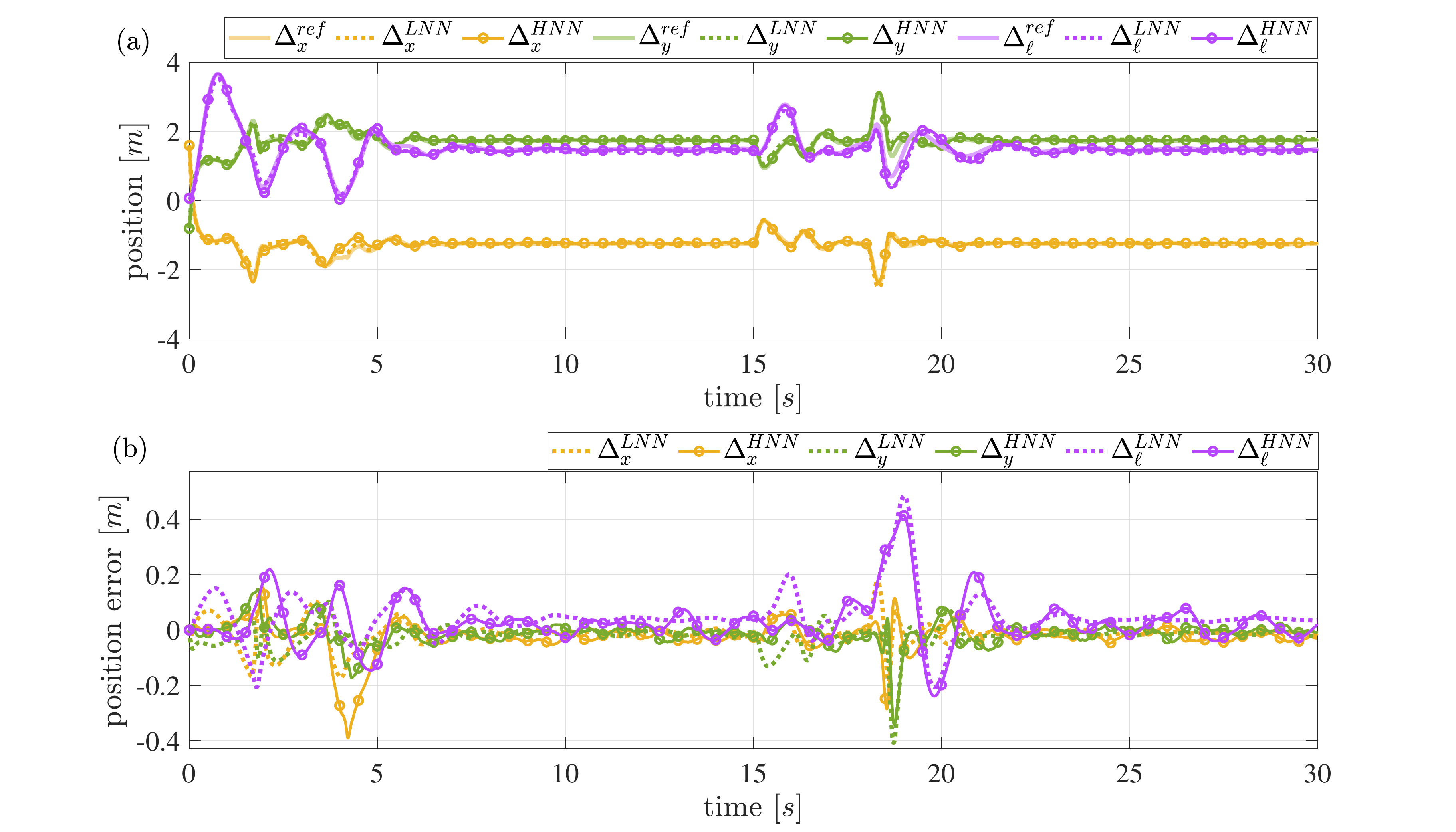}
    \caption{One-segment soft manipulator HNN and LNN comparison: (a) shows the Lagrangian-based learned model prediction results ($\cdots$), Hamiltonian-based learned model prediction results ($\circ$), and the ground-truth prediction ($-$); (b) error of the two models with the ground truth.}
    \label{fig: one_segment_LNN_HNN}
\end{figure}

\begin{table}[!htb]
\centering
\caption{Mathematical model matrices of one-segment soft manipulator}
\label{Task 3: Mathematical model matrices}
\scalebox{0.8}{
\begin{tabular}{cccccc}
\hline
$q$                                                   & $M(q)$                                                                                                                                            & $M^{-1}(q)$                                                                                                       & $D(q) $                                                                                    & $G(q) $                                                 & $A(q) $                                                                                                                                                                                                                  \\ 
\hline
$\begin{bmatrix} 1.20\\-0.20\\0.15 \end{bmatrix}$  & $\begin{bmatrix} 1.73e^{-3} & -3.12e^{-5} & -1.96e^{-3} \\ -3.12e^{-5} & 1.55e^{-3} & 3.26e^{-4} \\ -1.96e^{-3} &  3.26e^{-4} & 9.29e^{-2} \end{bmatrix}$ & $\begin{bmatrix} 593.09 &    9.35 &12.47\\ 9.35 & 647.61 & -2.08 \\ 12.47 & -2.08 &  11.04 \end{bmatrix}$& \multirow{4}{*}{$\begin{bmatrix}0.1 &  0 & 0 \\0 & 0.1 & 0 \\0 & 0 & 0.1 \end{bmatrix}$} & $\begin{bmatrix} 1.29\\-0.22\\-1.15 \end{bmatrix}$ & $\begin{bmatrix} -0.04 & -1.0 &  0.07 \\ 0.78 &  0.04 & -0.01 \\ 0. &     0. &   0.77 \end{bmatrix}$        \\ 
$\begin{bmatrix} \; \;0.80 \;\;\\ \; \;0.20 \;\;\\ \; \;0.30 \;\;\end{bmatrix}$    & $\begin{bmatrix}3.64e^{-3} &  4.52e^{-5} & -1.94e^{-3}\\ 4.52e^{-5} & 3.47e^{-3} & -4.84e^{-4} \\ -1.94e^{-3} & -4.84e^{-4} &  9.67e^{-2} \end{bmatrix}$  & $\begin{bmatrix}277.76 &  -2.84 & 5.55\\ -2.84 & 288.42 &  1.39\\ 5.55 & 1.39 &  10.46 \end{bmatrix}$&                                                                                          & $\begin{bmatrix}0.89\\ 0.22 \\-1.09 \end{bmatrix}$ & $\begin{bmatrix} 0.03 & -0.99 & 0.06 \\ 0.90 & -0.03 &  0.02 \\ 0.   &       0.   &       0.89 \end{bmatrix}$ \\ 
\hline
\end{tabular}}
\end{table}

\begin{table}[!htb]
\centering
\caption{Lagrangian-based learning model matrices of one-segment soft manipulator}
\label{Task 3: Physics-based learning model matrices (Lagrangian learning model)}
\scalebox{0.8}{
\begin{tabular}{cccccc} 
\hline
q & $\hat{M}(q)$ & $\hat{D}(q)$  & $\hat{G}(q)$ & $\hat{A}(q)$  &  $P(q)$ \\\hline
$\begin{bmatrix}
1.20 \\ -0.20 \\ 0.15 \end{bmatrix}$  & $\begin{bmatrix} 
4.23e^{-3} & 1.20e^{-3} & -0.03 \\ 1.20e^{-3} & 5.99e^{-3} & -0.02\\ -0.03 & -0.02 & 0.59 \end{bmatrix}$ & $\begin{bmatrix} 
0.16 & -0.02 & 0.0 \\ -0.02 & 0.33 & -0.01 \\ 0.0 & -0.01 & 0.35 \end{bmatrix}$ & $\begin{bmatrix} 2.44\\-0.61\\ -5.25 \end{bmatrix}$  & $\begin{bmatrix}
0.12 & -1.72& -0.21\\ 3.05 & -0.19 & -0.13 \\ -0.34 & 1.01 & 3.40 
\end{bmatrix}$ & $\begin{bmatrix}
0.61 & -0.02 & 0.03 \\-0.02 & 0.28 & 0.01\\ 0.33 & 0.15 & 0.25
\end{bmatrix}$    \\
$\begin{bmatrix} \; \;0.80 \;\;\\ \; \;0.20 \;\;\\ \; \;0.30 \;\;\end{bmatrix}$    & $\begin{bmatrix} 6.93e^{-3} & 1.84e^{-3} & -0.03 \\ 1.84e^{-3} & 0.01 & -0.02 \\-0.03 & -0.02 & 0.50 \end{bmatrix}$ & $\begin{bmatrix} 0.17 & -0.01 & -0.0 \\ -0.01 & 0.33 & -0.01 \\ -0.0 & -0.01 & 0.35 \end{bmatrix}$ & $\begin{bmatrix} 1.62 \\ 0.81\\ -4.67 \end{bmatrix}$  & $\begin{bmatrix} 0.19 &  -1.66 & -0.20\\ 2.97 & -0.25 & -0.13 \\ -0.40 & 1.01 & 3.43 \end{bmatrix}$  & $\begin{bmatrix} 
0.62 & -0.02 & 0.03 \\ -0.02 & 0.31 & 0.01 \\ 0.21 & 0.10 & 0.26 \end{bmatrix}$  \\\hline
\end{tabular}}
\end{table}

\begin{table}[ht]
\centering
\caption{Hamiltonian-based learning model matrices of one-segment soft manipulator}
\label{Task 3:Physics-based learning model matrices (Hamiltonian learning model)}
\scalebox{0.8}{
\begin{tabular}{ccccc} 
\hline
q                                                  &$\hat{M}^{-1}(q)$                                                                                                                                        & $\hat{D}(q)$                                                                                                           & $\hat{G}(q)$                                                      & $\hat{A}(q)$                                                                                                                                                                                            \\ 
\hline
$\begin{bmatrix} 1.20 \\ -0.20 \\ 0.15 \end{bmatrix}$  &$\begin{bmatrix} 600.32 & 16.90&   15.67 \\ 16.90 &  622.92 & -1.34\\ 15.67 & -1.34 &  11.61 \end{bmatrix}$   & $\begin{bmatrix} 1.02e^{-1} & 3.44e^{-3} &  8.12e^{-5} \\3.44e^{-3} & 1.05e^{-1} & -4.39e^{-4} \\ 8.12e^{-5} & -4.39e^{-4} &  9.91e^{-2} \end{bmatrix}$ & $\begin{bmatrix} 1.33\\-0.18\\-1.15 \end{bmatrix}$ & $\begin{bmatrix} -0.06 & -0.94 & 0.05 \\ 0.83 &   0.02 & -0.04 \\ 0.0 &  0.01&  0.78 \end{bmatrix}$                                  \\
$\begin{bmatrix} \; \;0.80 \;\;\\ \; \;0.20 \;\;\\ \; \;0.30 \;\;\end{bmatrix}$    &$\begin{bmatrix} 285.01 &    11.08 &   6.65\\11.08 &  292.46 & 2.06\\ 6.65&   2.06 & 10.59\end{bmatrix}$      & $\begin{bmatrix} 1.01e^{-1} & 3.48e^{-3} & 6.56e^{-4}\\3.48e^{-3} & 1.03e^{-1} & -7.45e^{-5} \\ 6.56e^{-4} & -7.45e^{-5} &  9.87e^{-2}\end{bmatrix}$      & $\begin{bmatrix} 0.93 \\ 0.25 \\ -1.10 \end{bmatrix}$ & $\begin{bmatrix} 0.03 & -0.96 &  0.05\\ 0.92 & -0.03 & -0.02\\-0.01 &  0.0 & 0.89 \end{bmatrix}$  \\
\hline
\end{tabular}}
\end{table}

The matrices obtained from these two physics-based learning models are shown in Table \ref{Task 3: Physics-based learning model matrices (Lagrangian learning model)} and \ref{Task 3:Physics-based learning model matrices (Hamiltonian learning model)}, where ${G}({q})$ represents the potential forces, i.e., $\frac{\partial V({q})}{\partial {q}}$. As Table \ref{Task 3:Physics-based learning model matrices (Hamiltonian learning model)} shows, HNNs can learn the physically meaningful matrices, while LNNs only learn one of the solutions satisfying the Euler-Lagrangian equation. Comparing the corresponding matrices in Table \ref{Task 3: Mathematical model matrices} and \ref{Task 3: Physics-based learning model matrices (Lagrangian learning model)}, we can find that the matrices and vectors learned by the LNNs are related to the real parameters through a transformation ${P}({q})$.

\subsection{Two-segment 3D soft manipulator}
The two-segment soft manipulator model is simulated in MATLAB, where the configuration space is also defined as in the one-segment case. The training and testing information for this task is shown in Table \ref{table: two-segment simulate}. Figure \ref{fig: two_segment_LNN} summarizes the prediction results of the $50Hz$, $100Hz$, and $1000Hz$ learned model. From the simulations, we conclude that the higher the sampling frequency within a certain range, the more accurate the learned model is.
\begin{table}[!htb]
\caption{Two-segment simulated soft manipulator training and testing detailed information}
\centering
\begin{tabular}{ccccc} 
\hline
                 &\multirow{2}{*}{\makecell[c]{Black-box model \\ 100Hz}} & \multicolumn{3}{c}{Lagrangian-based learned model}                                                                                                            \\
                 & \multicolumn{1}{c}{}                                         & 50Hz                                               & 100Hz                                              & 1000Hz                                              \\ 
\hline
model(width $\times$ depth)            & $152 \! \times \!  3$                                               & $42 \! \times \!  3,   5 \! \times \!  3, 42 \! \times \!  2$ & $42 \! \times \!  3,   5 \! \times \!  3, 42 \! \times \!  2$ & $42 \! \times \!  3,   5 \! \times \!  3, 42 \! \times \!  2$  \\
sample number    & 59200                                    &45000                         & 45000                         & 45000                         \\
training epoch   & 15000                                    & 5500                          & 5500                           & 5500                           \\
traning error    & $3.536e^{-4} \pm 1.08e^{-3}$                                           & $5.916e^{-4} \pm 8.61e^{-3}$                                & $1.652e^{-4} \pm 2.12e^{-2} $                                & $1.822e^{-7} \pm 6.67e^{-6}$                                  \\
prediction error $[m]$ & $44.683\pm 4.518$(10s)                                           & $2.098 \pm 1.253$(10s)                                 & $1.690 \pm 0.673$(10s)                                 & $0.089 \pm 0.278$(10s)                                  \\
\hline
\end{tabular}
\label{table: two-segment simulate}
\end{table}
\begin{figure}[!htb!]
    \centering
    \includegraphics[width=1.0\columnwidth]{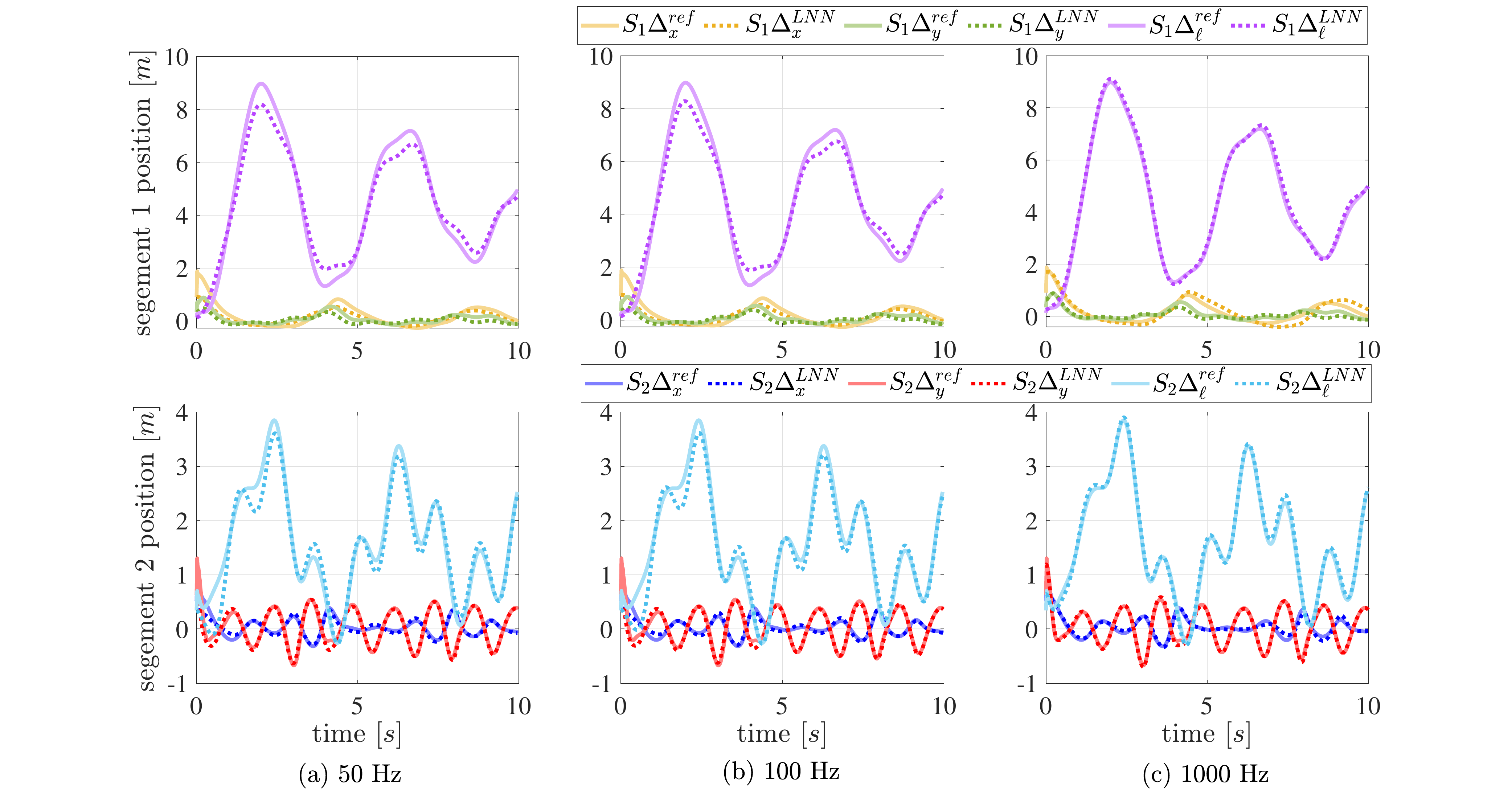}
    \caption{Two-segment soft manipulator prediction performances under different sampling frequencies}
    \label{fig: two_segment_LNN}
\end{figure}

\begin{figure}[!h]
\centering
\subfigure{
\includegraphics[width=7.2cm]{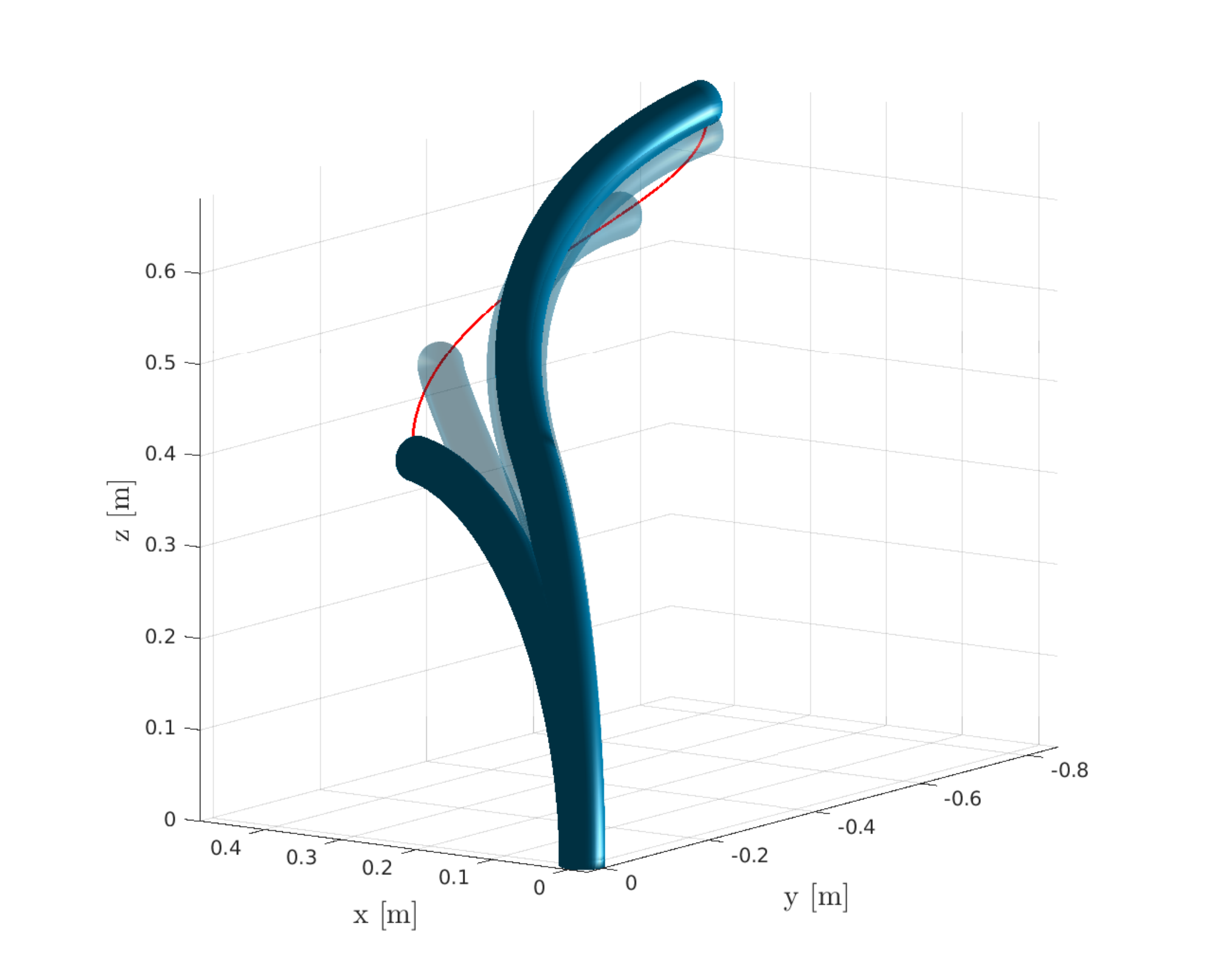}
\label{q0}
}
\quad
\subfigure{
\includegraphics[width=7.2cm]{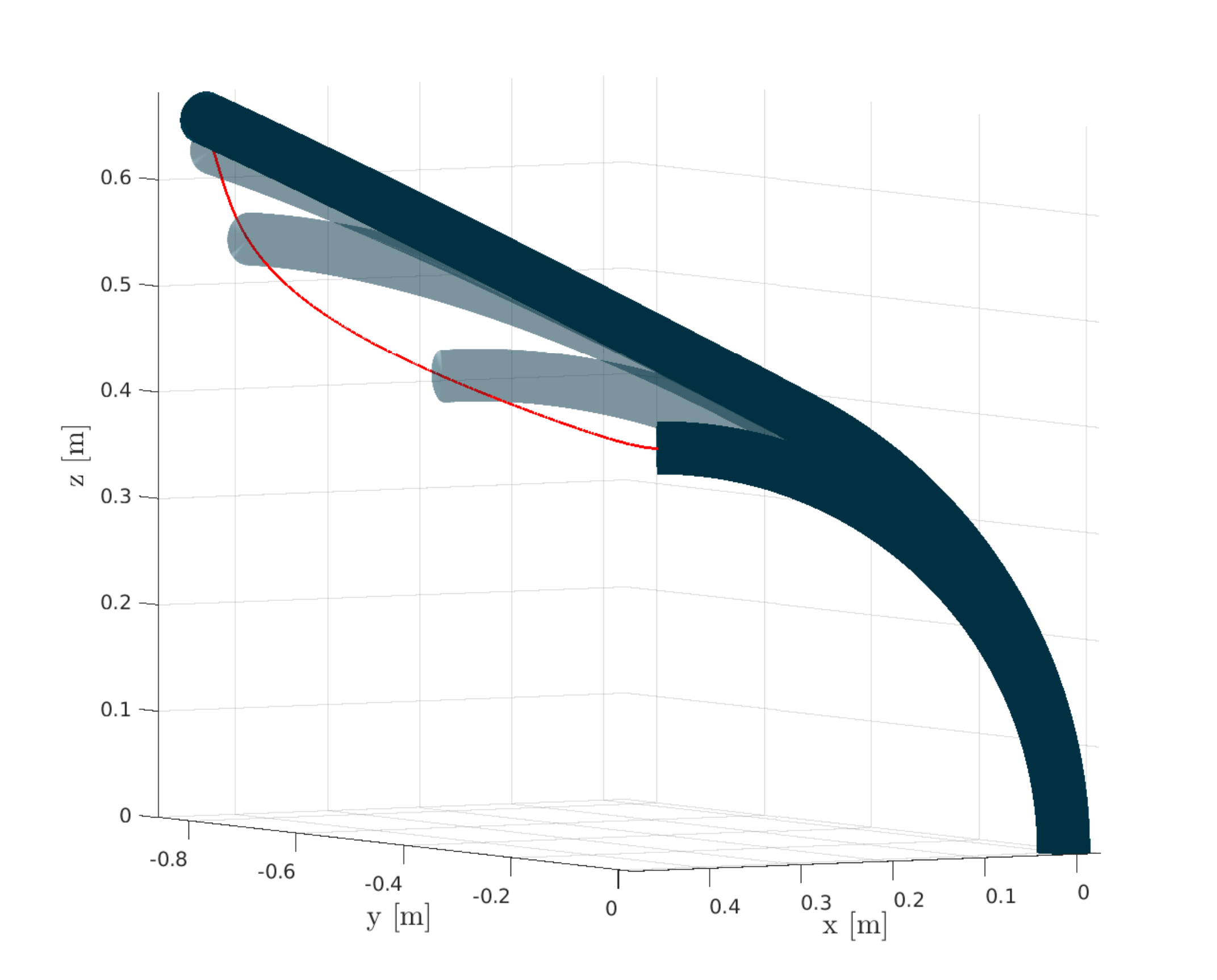}
\label{q1}
}
\caption{The sequence of movements at the times 0.0s, 0.1s, 0.3s, 0.6s, and 1.0s executed by the two-segment soft robot as a result of the implementation of the LNN-model-based controller. The red line represents the tip's position }\label{fig: two_sgement_sequence}
\end{figure}

Based on the learned model trained at 1000Hz, we devise a PINN-based control loop as in \eqref{eq:controller_PD}. To demonstrate the performance of the designed controller, we employ it to control the two-segment soft manipulator in MATLAB. The proportional gains ${K}_{\tt P}$ and derivative gains ${K}_{\tt D}$ are set to 10 and 50, respectively, for all six configurations. The alterations in the states of the two-segment manipulator under control are depicted in Figure \ref{fig: two_sgement_sequence}, whereas the performance of the controller is demonstrated in Figure \ref{fig: simulated_two_control_example1}. Results indicate that the controller is capable of tracking a static setpoint within one second while keeping the root mean square error (RMSE) less than 0.23\%, and exhibits a stable and minimal overshoot performance. These performances underscore the reliability and efficiency of the designed controller based on the learned model.

\begin{figure}[!htb]
    \centering
    \includegraphics[width=1.0\columnwidth]{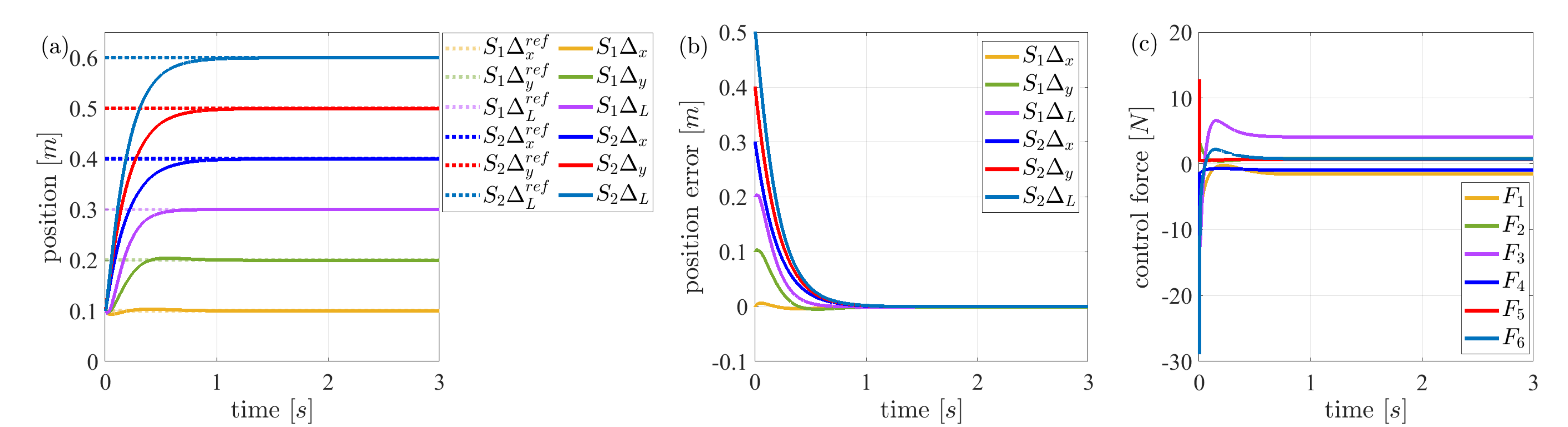}
    \caption{Two-segment soft manipulator model-based controller performance: (a) shows the evolution of the configuration variables and the desired state with dotted lines; (b) shows the error between the desired states and current states; (c) shows control effort.}
    \label{fig: simulated_two_control_example1}
\end{figure}

\subsection{Panda robot}
Table \ref{table: franka-simulation} presents the training and testing data of the simulated Panda in PyBullet, while Figure \ref{fig: franka_simulation} displays the prediction results obtained from the learned model. The model exhibits relatively accurate prediction performance within 1 second (i.e. continuous prediction for 1000 steps). Furthermore, the Lagrangian-based models can achieve long-term forecasting by updating the input values of the learned model to the real states at a fixed rate, typically ranging from 50 to 100 Hz.
\begin{table}[!h]
\centering
\caption{Franka simulation detailed information (1000Hz)}
\begin{tabular}{ccc} 
\hline
                        & Black-box model            & Lagrangian-based learned model          \\ 
\hline
model (width $\times$ depth)                   & $120 \! \times \! 4$       & $40 \! \times \! 3, 20 \! \times \! 2$  \\
sample number           & 550000 & 25000               \\
training epoch          & 10000  & 10000               \\
traning error           & $1.476e^{-4} \pm 2.69e^{-3}$       & $1.424e^{-4} \pm 2.90e^{-3} $                   \\
prediction error/ $[rad]$ & $5.132 \pm 15.691$(2s)         & $98.6937 \pm 6.411$(2s)                     \\
\hline
\end{tabular}
\label{table: franka-simulation}
\end{table}

\begin{figure}[!htb]
    \centering
    \includegraphics[width=0.7\columnwidth]{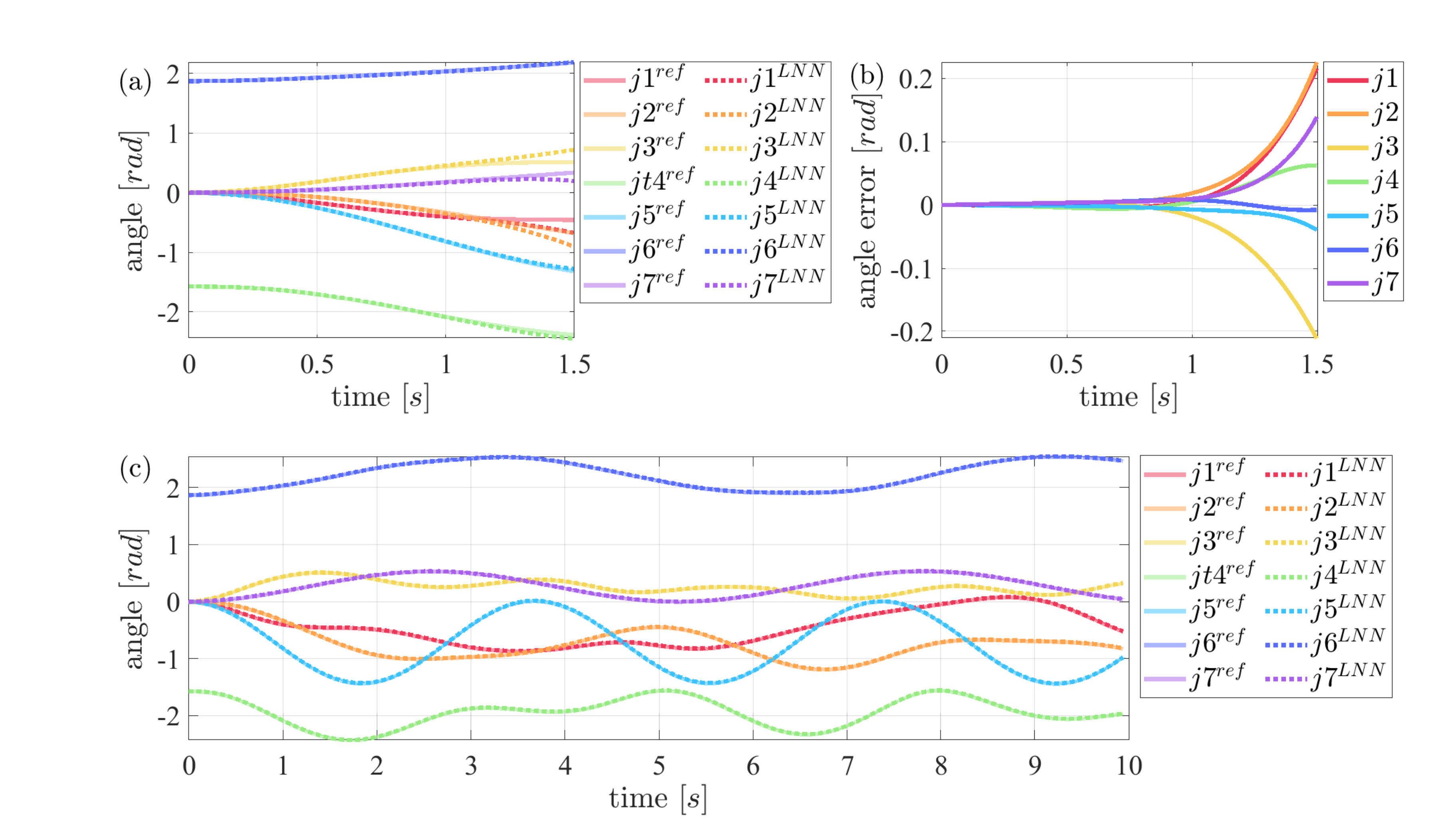}
    \caption{Franka Emika Panda learned model prediction results: (a) shows 1500 steps prediction in a row; (b) is the angle errors of the prediction with respect to the ground truth; (c) shows the long prediction results with 50-step window size.}
    \label{fig: franka_simulation}
\end{figure}
 
\begin{figure}[!htb]
    \centering
    \includegraphics[width=0.8\columnwidth]{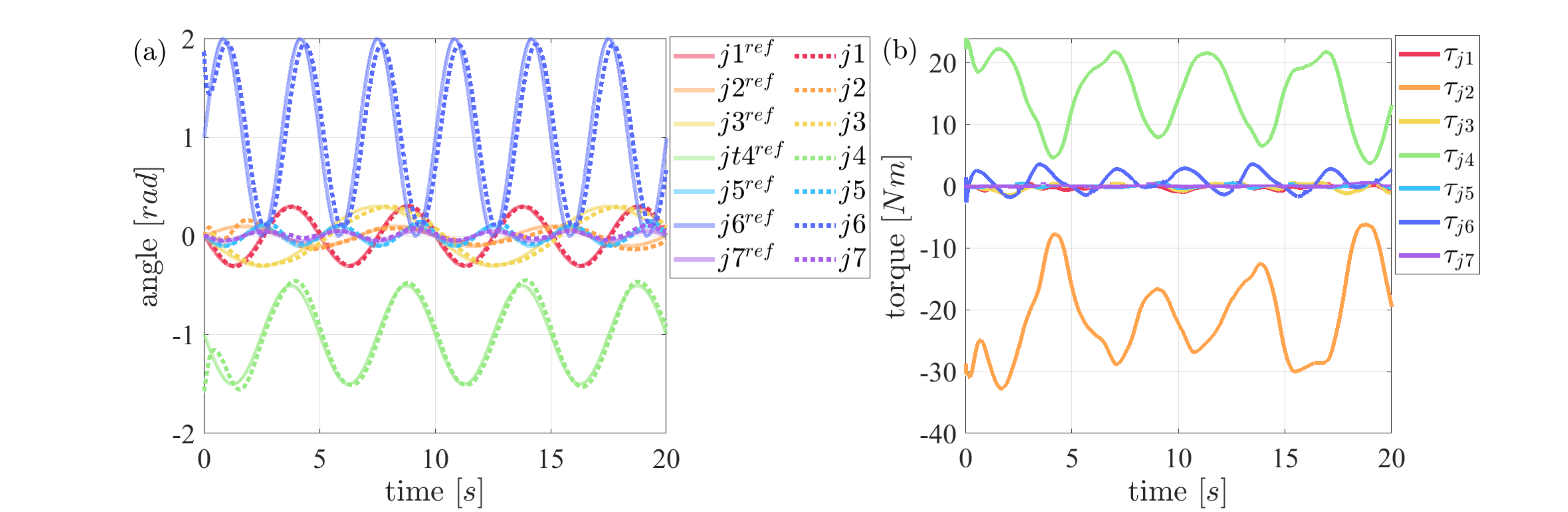}
    \caption{Performance of the model-based controller designed using the model learned by the LNNs. The desired trajectories are plotted with dotted lines.}
    \label{fig: franka_simulation_control}
\end{figure}
Based on this learned model, we build the tracking controller discussed in Sec. \ref{sec:mb_control}.
The results are depicted in Figure \ref{fig: franka_simulation_control}, where we observe that our controller has a fast response time and can quickly adapt to changes in the reference signal. It can maintain high accuracy and low phase lag, which makes it well-suited for tracking fast-changing signals.


\section{Experimental Validation} \label{sec: Experimental Validation}
\subsection{One-segment tendon-driven soft manipulator -- NECK}
We validate the proposed approach in the platform depicted in Figure \ref{fig:real_experiment}, which is constructed based on \cite{9762144, GithubExperimentProject}. We consider two different data preprocessing methods. (i) Moving average method: This method reduced the noise and outliers in the data, generating a more stable representation of underlying trends. However, it may overlook intricate relationships between variables, resulting in some information loss. (ii) Polynomial fitting: This method captured non-linear patterns in the data. However, it was susceptible to the influence of outliers, resulting in spurious information that may compromise the quality of the trained model.

The training and testing information is shown in Table \ref{table: neck}.
\begin{figure}[!htb]
    \centering
    \includegraphics[width=0.6\columnwidth]{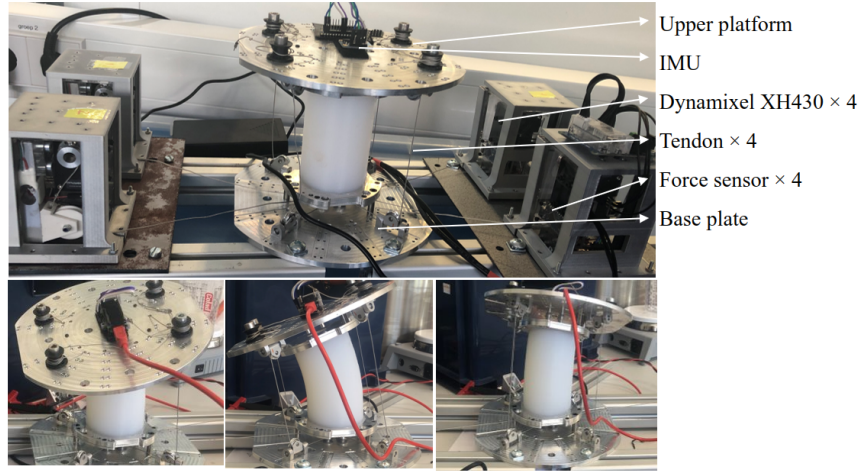}
    \caption{Experiment platform: One-segment tendon-driven soft manipulator equipped with IMU}
    \label{fig:real_experiment}
\end{figure}

The method of moving average is implemented in MATLAB through the utilization of the \texttt{movmean} function, with a prescribed window size of 50 points. The processed data are used for training the LNNs. In Figure \ref{fig:smoothing_data_prediction_no_update}, we compare the continuous prediction ability of black-box and Lagrangian-based learning models. The prediction performance in this figure indicates that the Lagrangian-based learning model exhibits superior predictive accuracy in this sample. Furthermore, Figure \ref{fig:smoothing_data_prediction_no_update} (c) shows that the learning model can realize long-term predictions under the short-term update.
\begin{table}[ht]
\centering
\caption{The tendon-driven soft robot -- NECK training and testing information}
\begin{tabular}{cccc} 
\hline
\multicolumn{1}{l}{}       &                  & Black-box model           & Lagrangian-based learned model                             \\ 
\hline
\multirow{5}{*}{smoothing} & model            & $60 \! \times \! 3$       & $21 \! \times \! 2, 25 \! \times \! 2, 10 \! \times \! 2$  \\
                           & sample number    & 69426 & 69426                                  \\
                           & training epoch   & 10000 & 3000                                   \\
                           & traning error    & $1.985e^{-2}\pm 1.85e^{-1}$      & $2.277e^{-2} \pm 2.39e^{-1}$                                       \\
                           & prediction error $[^\circ]$ & $13.229 \pm 60.762$ (5s)       & $2.429 \pm 1.259$ (5s)                                       \\ 
\hline
\multirow{5}{*}{fitting}   & model            & 6$0 \! \times \! 3$       & $21 \! \times \! 2, 25 \! \times \! 2, 10 \! \times \! 2$  \\
                           & sample number    & 57950 & 48200                                  \\
                           & training epoch   & 5000  & 5000                                   \\
                           & training error    & $4.431e^{-3} \pm 3.07e^{-2}$      & $2.758e^{-3} \pm 2.84e^{-2} $                                      \\
                           & prediction error$[^\circ]$ & $8.368 \pm 12.575$ (5s)         & $6.426 \pm 36.237$ (5s)                                           \\
\hline
\end{tabular}
\label{table: neck}
\end{table}

\begin{figure}[!htb]
    \centering
    \includegraphics[width=0.6\columnwidth]{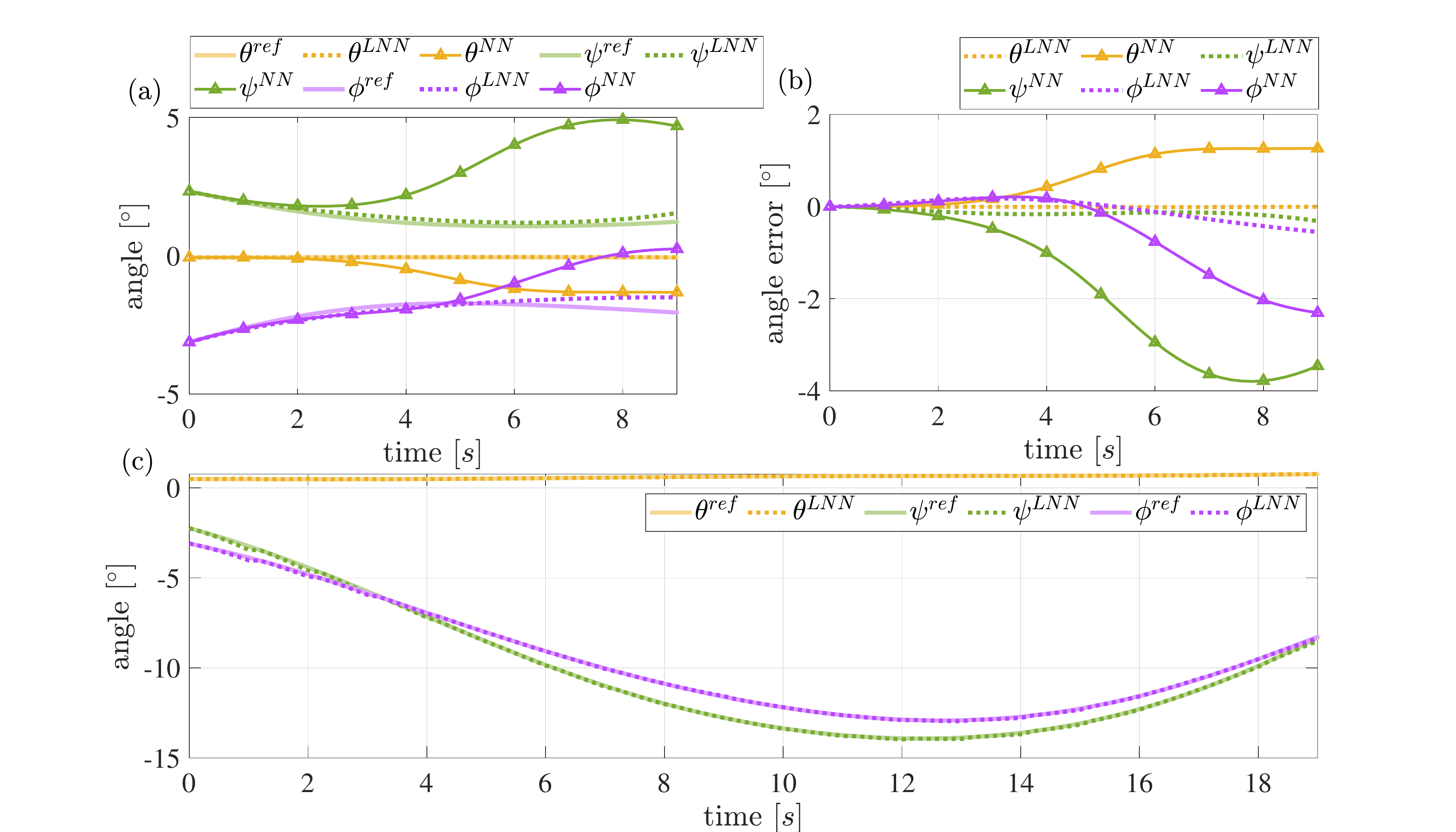}
    \caption{The smoothing data black-box model ($\bigtriangleup$) and physics-based learning model (- -) continuous prediction results: (a) and (b) show prediction 43 prediction steps in a row; (c) depicts the prediction results with 5-step window size.}
    \label{fig:smoothing_data_prediction_no_update}
\end{figure}
\begin{figure}[!htb]
    \centering
    \includegraphics[width=0.6\columnwidth]{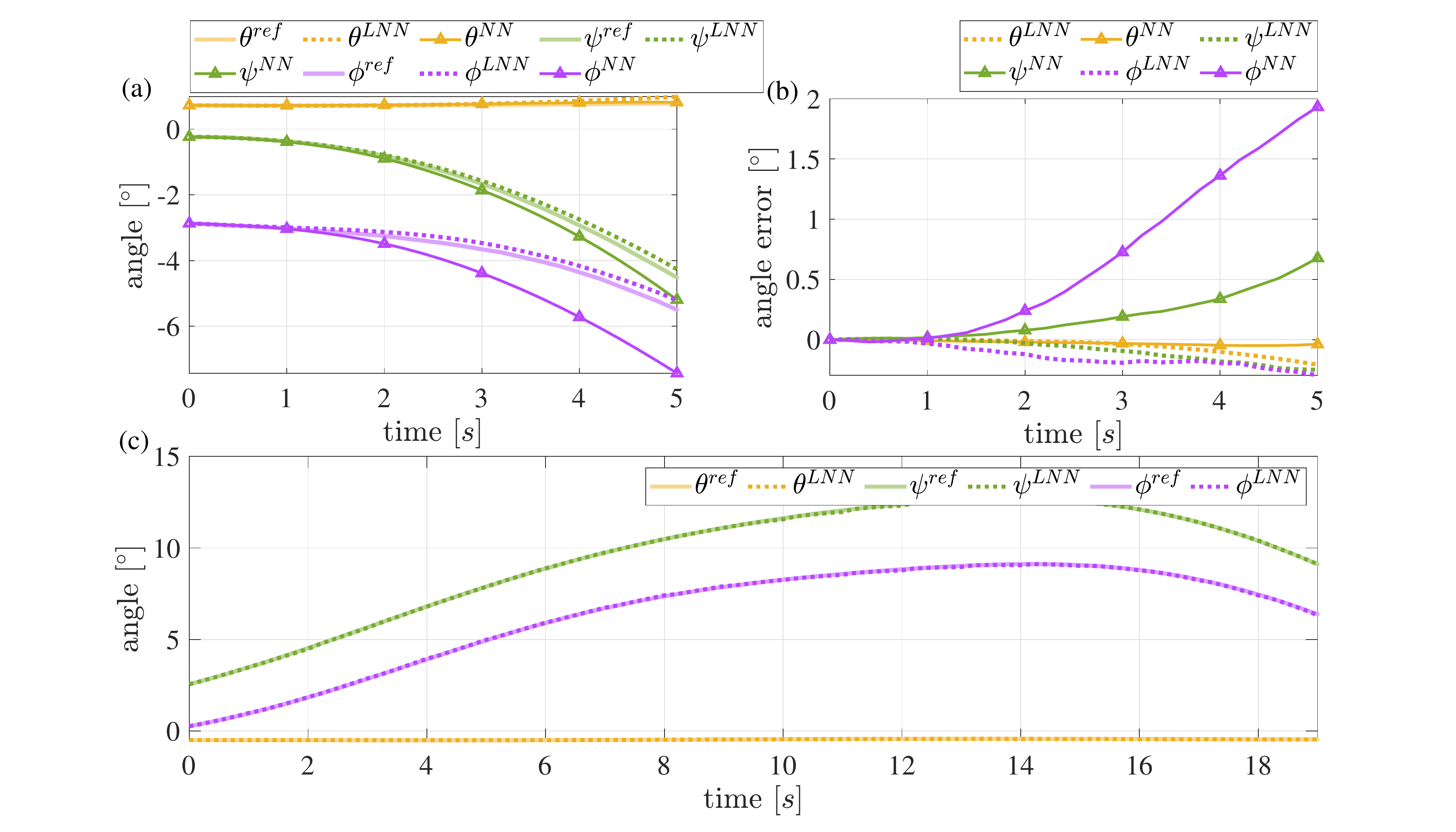}
    \caption{The fitting data black-box model ($\bigtriangleup$) and physics-based learning model ($\cdots$) continuous prediction results: (a) and (b) show 25 prediction steps in a row; (c) shows the prediction results with 5-step window size.}
    \label{fig:fitting_data_prediction_no_update}
\end{figure}
The polynomial fitting of the data is done in MATLAB using the function \texttt{polyfit}. The prediction results of the model are shown in Figure \ref{fig:fitting_data_prediction_no_update}.  The learned model exhibits a decent performance when the window size is reduced, as shown in Figure \ref{fig:fitting_data_prediction_no_update}(c). In contrast to the previous model, this model exhibits significant prediction errors shown in Table \ref{table: neck}. This can be caused by the significant noise in the sensors and misinformation caused by the approximation used to fit the data.

\subsection{Rigid Robot -- Franka Emika Panda}
The collected data are processed through a Butterworth filter in MATLAB to reduce noise. Further details are provided in Table \ref{table: franka-real}. In the  experiment, we observe small joint acceleration, which results in minimal velocity change. To prevent the network from focusing solely on learning a large mass matrix and neglecting other important factors, we utilize a scaling sigmoid function. This function ensures that the elements in the mass matrix are scaled within a specific range. For this particular case, we have set the scaling factor to 3.50.
\begin{table}[!htb]
\centering
\caption{Franka experiment detailed information (500Hz)}
\begin{tabular}{ccc} 
\hline
                        & Black-box model            & Lagrangian-based learned model          \\ 
\hline
model (width $\times$ depth)                  & $120 \! \times \! 5$       & $40 \! \times \! 3, 20 \! \times \! 2$  \\
sample number           & 550000 & 25000              \\
training epoch          & 10000 & 3000             \\
traning error           & $1.371e^{-5} \pm 2.03e^{-5}$       & $1.68e^{-7} \pm 6.64e^{-6}$                    \\
prediction error$[rad]$ & $182.495 \pm 64.645$ (2s)       & $2.681 \pm 1.383$ (2s)                      \\
\hline
\end{tabular}
\label{table: franka-real}
\end{table}

Figure \ref{fig: franka_real} illustrates the predictive performance of our physics-based model, where Figure \ref{fig: franka_real} (b) depicts the continuous prediction error within 2 seconds or 1000 prediction steps and (c) shows that updating the model's input with real-time state data can help us make a long prediction. 

\begin{figure}[h]
    \centering
    \includegraphics[width=1.0\columnwidth]{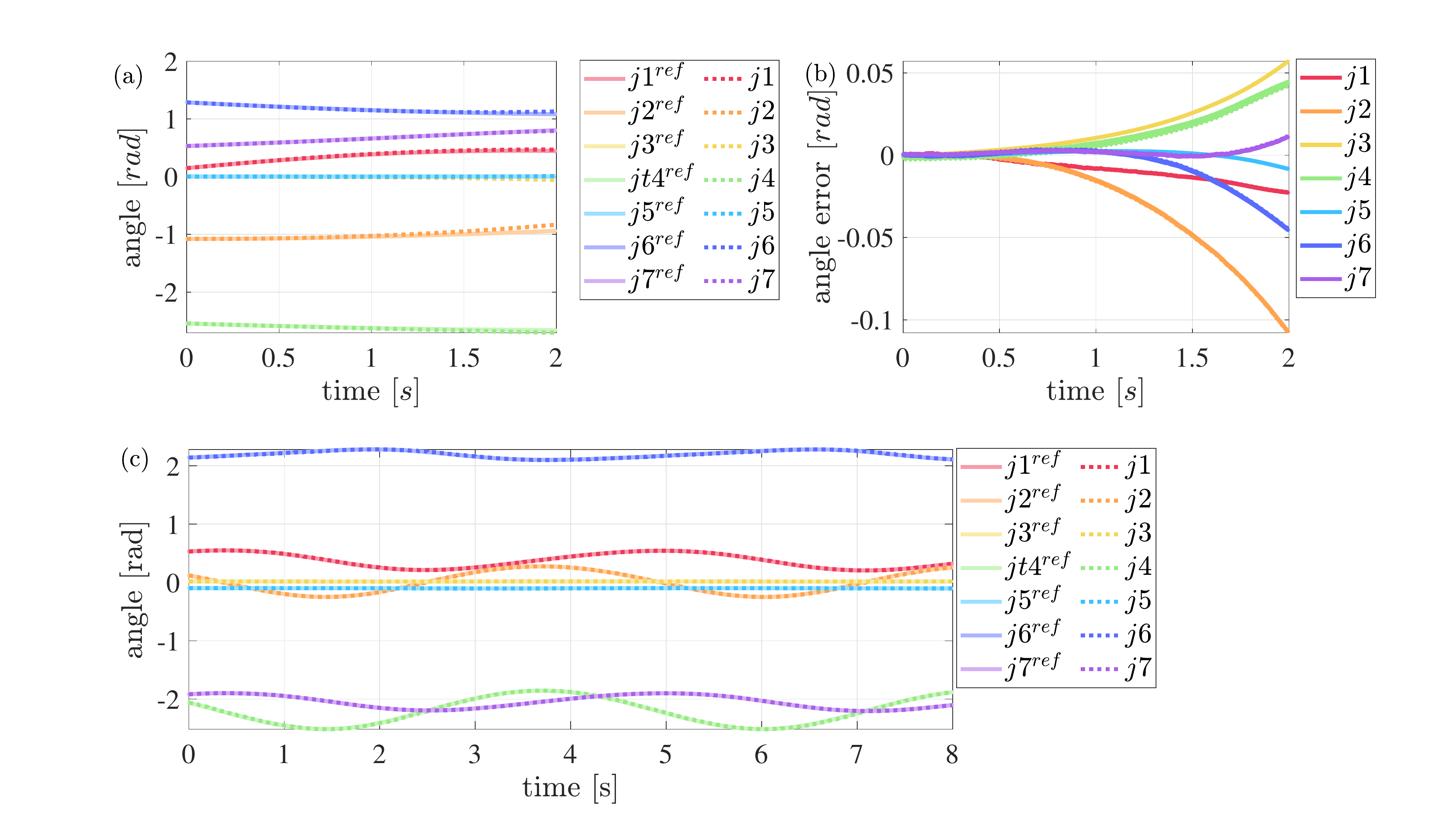}
    \caption{Panda physics-based learning model prediction results: (a) and (b) show prediction of about 800 steps in a row; (c) depicts the prediction results with 5-step window size.}
    \label{fig: franka_real}
\end{figure}
A controller based on the equation presented in \eqref{eq: controller} is proposed for the actual robot. The proportional gain matrix, ${K}_{\tt P}$, is set to a diagonal matrix with entries $600$, $600$, $600$, $600$, $250$, $150$, and $50$, respectively. The derivative gain matrix, ${K}_{\tt D}$, is set to a diagonal matrix with entries $30$, $30$, $30$, $30$, $10$, $10$, and $5$, respectively. Figure \ref{fig: franka_movement} illustrates a series of photographs depicting the periodic movement used to track a sinusoidal trajectory within a time frame of 10 seconds. The whole tracking performance is shown in Figure \ref{fig: franka_real_control}. 
\begin{figure}[h]
    \centering
    \includegraphics[width=0.9\columnwidth]{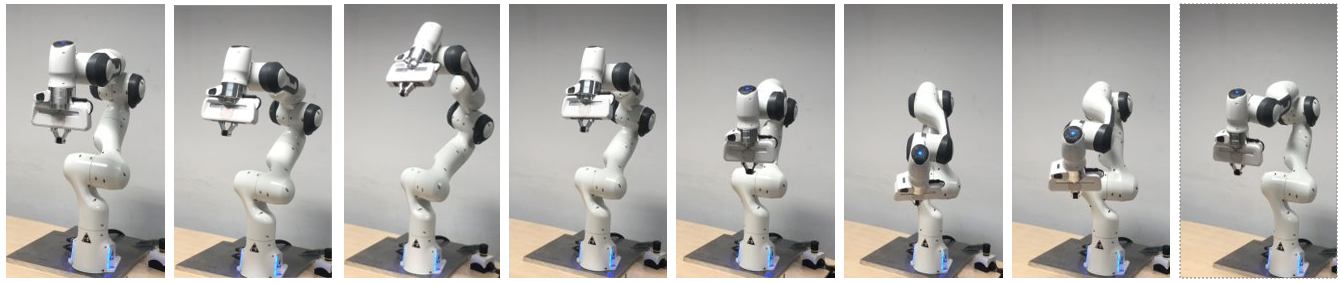}
    \caption{Photo sequence of one periodic movement resulting from the application of the LNN-model-based controller tracking trajectory}
    \label{fig: franka_movement}
\end{figure}

\begin{figure}[h]
    \centering
    \includegraphics[width=0.7\columnwidth]{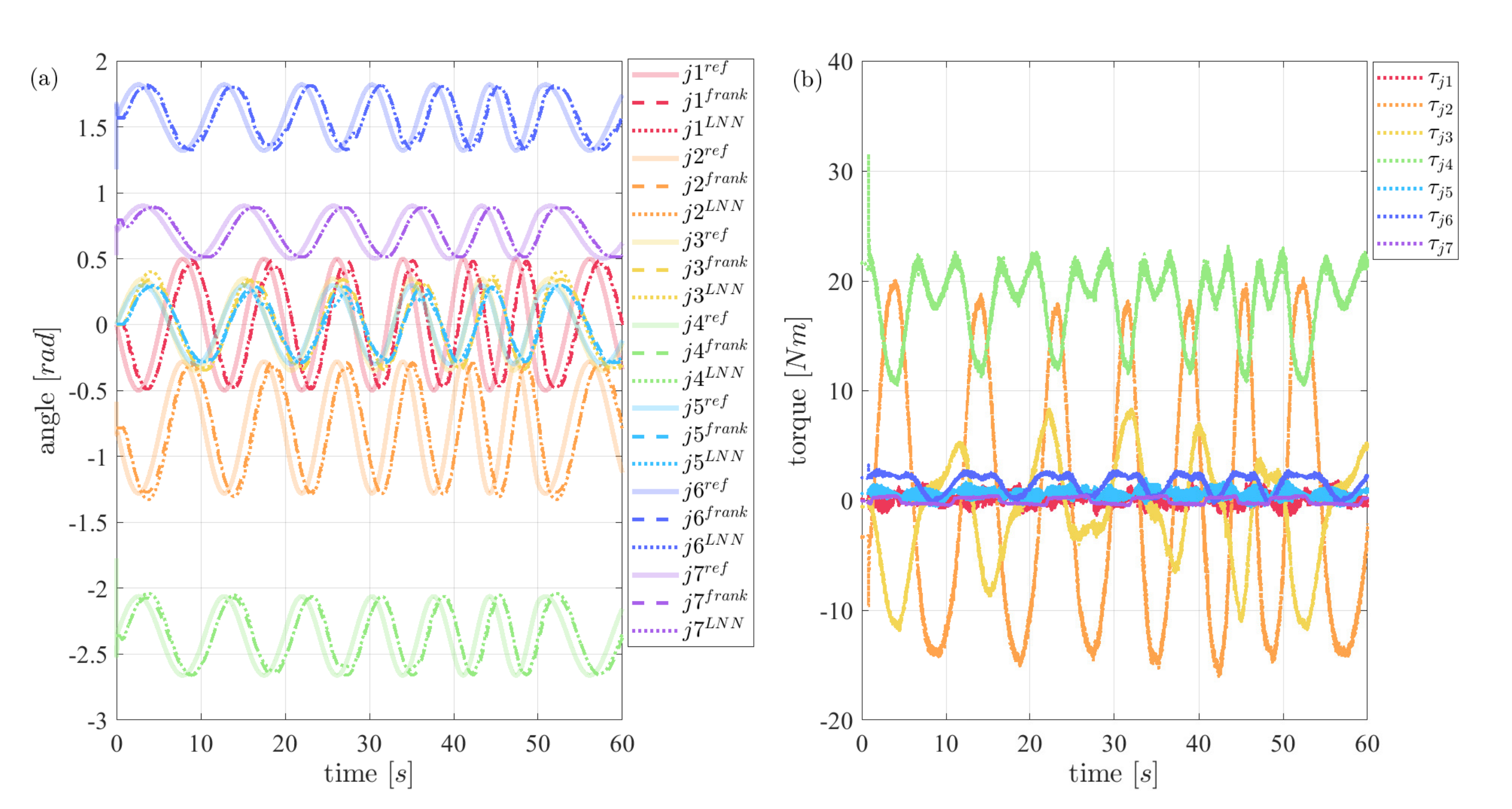}
    \caption{Performance of the model-based controller that is designed using the learned model.}
    \label{fig: franka_real_control}
\end{figure}

\begin{figure}[h]
    \centering
    \includegraphics[width=0.9\columnwidth]{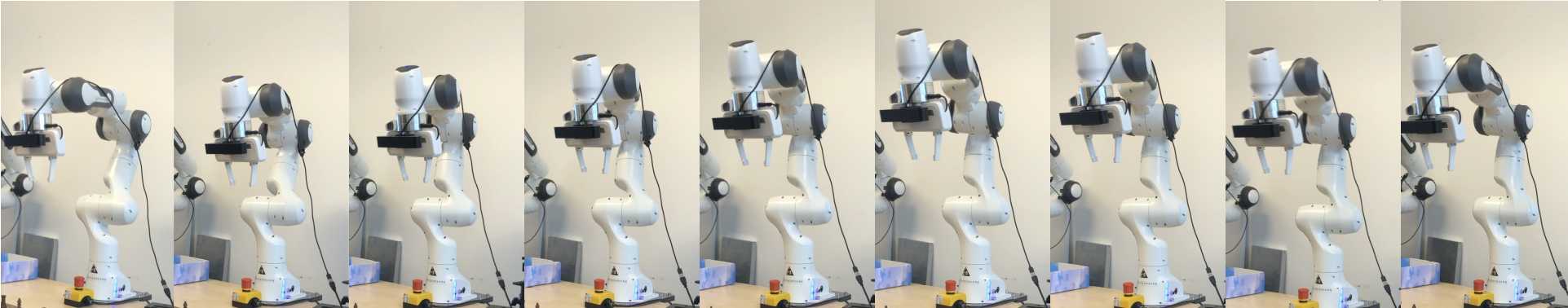}
    \caption{Photo sequence of helical motion of the end-effector by using LNN-model-based controller}
    \label{fig: sequence4helixMotion}
\end{figure}

\begin{figure}[h]
    \centering
    \includegraphics[width=0.8\columnwidth]{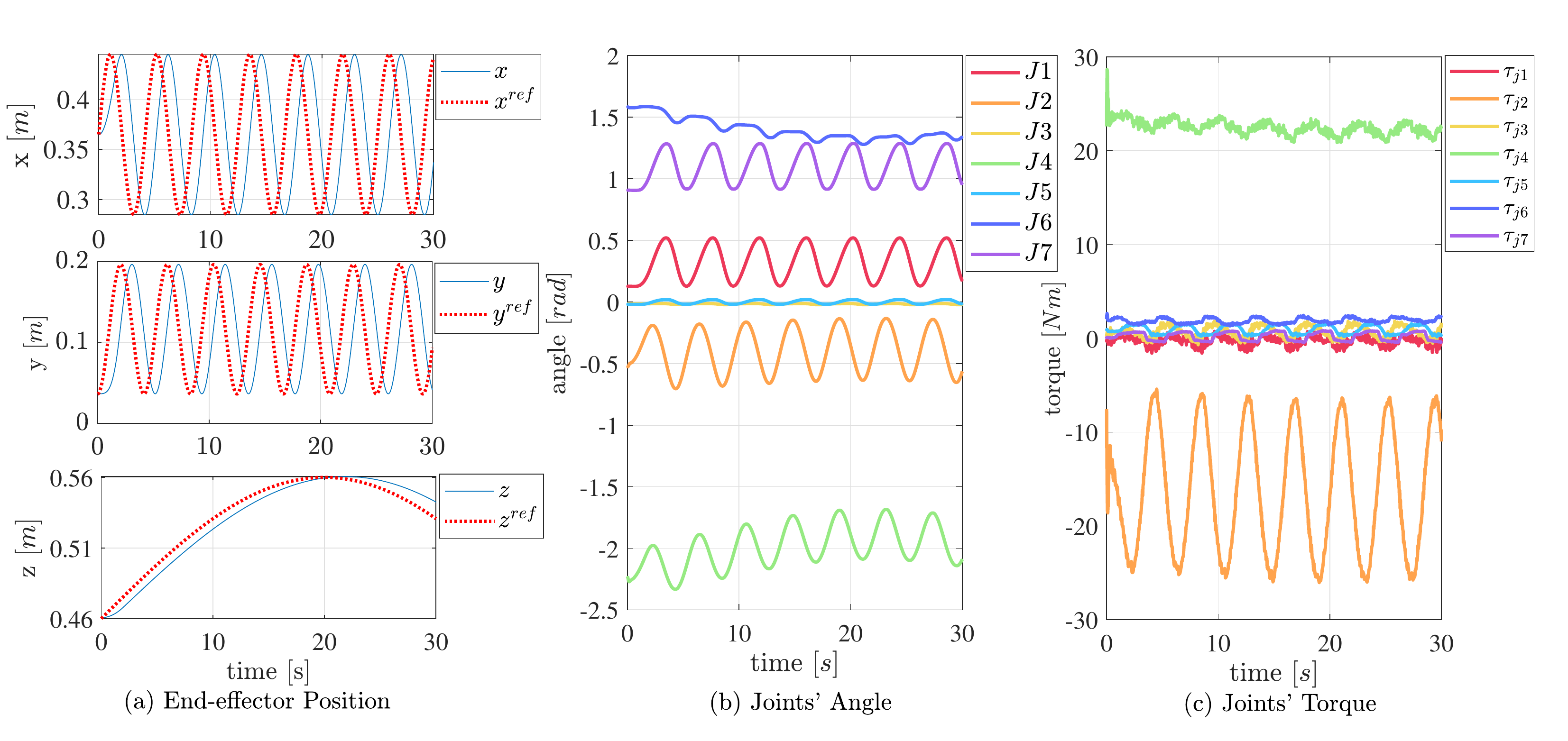}
    \caption{Performance of the model-based controller that is designed using the learned model: (a) shows the desired end-effector trajectory; (b) shows the corresponding joints' angle and the control results; (c) is the controller's input torques for such motion.}
    \label{fig: franka_circle}
\end{figure}
Furthermore, we have presented the trajectory of the end-effector, which is a helical motion shown in Figure \ref{fig: sequence4helixMotion}, and its resultant control effect has been visually demonstrated in Figure \ref{fig: franka_real_control}.

In these Figures, we can observe that the designed controller has satisfactory performance, as evidenced by its ability to track a desired trajectory. The tracking error, while present in some joints, remains within acceptable bounds and does not significantly impair the overall performance of the controller in practical applications. An examination of the controller's performance reveals that, while generally effective, its performance exhibits some degree of variability across different joints. The overall performance of the controller remains within acceptable levels and suggests its potential for effective use in real-world applications.

\section{Conclusions}
\label{sec:6_Conclusion}
This paper presented an approach to consider damping and the interaction between robots an/d actuators in PINNs---specifically, LNNs and HNNs---, improving the applicability of these neural networks for learning dynamic models. Moreover, we used the Runge-Kutta4 method to avoid acceleration measurements, which are often unavailable. The modified PINNs proved suitable for learning the dynamic model of rigid and soft manipulators. For the latter, we considered the PCC approximation to obtain a simplified model of the system. 

The modified PINNs approach exploits the knowledge of the underlying physics of the system, which results in a largely improved accuracy in the learned models compared with the baseline models, which were trained using an fully connected network. The results show that PINNs exhibit a more instructive and directional learning process because of the prior knowledge embedded into the approach. Notably, physics-based learning models trained with fewer data are more general and robust than the traditional black-box ones. Therefore, continuous long-term and variable step-size predictions can be achieved. Furthermore, the learned model enables decent anticipatory control, where a naive PD can be integrated for a good performance, as illustrated in the experiments performed with the Panda robot.

\medskip
\textbf{Acknowledgements} \par 
We wish to acknowledge the EMERGE for their financial support, which enabled us to carry out this research. We are also grateful to Bastian Deutschmann, the inventor of the NECK experimental platform, which greatly facilitated our work. I would also like to express my deepest gratitude to Francesco Stella and  Tomás Coleman for their invaluable guidance and help in the experiments. Finally, we extend our appreciation to  our colleagues for their insightful feedback and constructive criticism, which helped refine our ideas and methods. 
\medskip

\end{document}